\crefname{example}{Example}{Examples}
\crefname{lemma}{Lemma}{Lemmas}
\crefname{cor}{Corollary}{Corollaries}
\crefname{theorem}{Theorem}{Theorems}
\crefname{assumption}{Assumption}{Assumptions}
\crefname{defn}{Definition}{Definitions}
\newtheorem{proposition}{Proposition}
\newcommand{\Good}{\textit{good}\xspace}
\newcommand{\Great}{\textit{great}\xspace}
\newcommand{\Excellent}{\textit{excellent}\xspace}
\newtcolorbox{mybox}[1][]{myboxstyle,#1}
\title{Chasing the Tail: Effective Rubric-based Reward Modeling for Large Language Model Post-Training}
\author{
\begin{tabular}[t]{@{}>{\raggedright\arraybackslash}p{0.38\textwidth}>{\raggedright\arraybackslash}p{0.30\textwidth}>{\raggedright\arraybackslash}p{0.30\textwidth}@{}}
\textbf{Junkai Zhang\textsuperscript{*\dag}} & \textbf{Zihao Wang\textsuperscript{*}} & \textbf{Lin Gui\textsuperscript{*}} \\
\mdseries University of California, Los Angeles & \mdseries Scale AI, Inc. & \mdseries University of Chicago \\[1.5em]
\textbf{Swarnashree Mysore Sathyendra} & \textbf{Jaehwan Jeong} & \textbf{Victor Veitch} \\
\mdseries Scale AI, Inc. & \mdseries Scale AI, Inc. & \mdseries University of Chicago \\[1.5em]
\textbf{Wei Wang} & \textbf{Yunzhong He} & \textbf{Bing Liu} \\
\mdseries University of California, Los Angeles & \mdseries Scale AI, Inc. & \mdseries Scale AI, Inc. \\[1.5em]
\textbf{Lifeng Jin} & & \\
\mdseries Scale AI, Inc. & & \\
\end{tabular}
}
\begin{document}
\maketitle
\renewcommand{\thefootnote}{\fnsymbol{footnote}}
\footnotetext[1]{Equal contribution.}
\footnotetext[2]{Work done during internship at Scale AI.}

\begin{abstract}
Reinforcement fine-tuning (RFT) often suffers from \emph{reward over-optimization}, where a policy model hacks the reward signals to achieve high scores while producing low-quality outputs. Our theoretical analysis shows that the key lies in reward misspecification at the high-reward tail: the inability to reliably distinguish \Excellent responses from merely \Great ones.
This motivate us to focus on the high-reward region. However, such tail examples are scarce under the base LLM. While off-policy exemplars (e.g. from stronger models or rewrites) are easier to obtain, naively training on them yields a misspecified reward for the policy we aim to align.
To address this, we study \emph{rubric-based rewards}. By design, rubrics can leverage off-policy examples while remaining insensitive to their artifacts.
To elicit rubrics that capture the high-reward tail, we highlight the importance of distinguishing among \textbf{great} and \textbf{diverse} responses, and introduce a workflow to implement this idea. We empirically demonstrate that rubric-based rewards substantially mitigate reward over-optimization and deliver effective LLM post-training improvements.\footnote{Our code can be accessed at \href{https://github.com/Jun-Kai-Zhang/rubrics.git}{https://github.com/Jun-Kai-Zhang/rubrics.git}.}
\end{abstract}

\section{Introduction}

In this paper, we are interested in how to produce reward models that are effective when used for LLM post-training. A reward model is a function that takes a prompt and a response and produces a score quantifying how good that response is for the prompt. In post-training, we then align a language model to the reward by a reinforcement learning type procedure. The fundamental challenge here is that, in many settings, it is nearly inevitable that the reward model will be an imperfect proxy for the behavior that we are actually trying to induce. In particular, this means that as we run post-training, it will increasingly be the case that the LLM is aligned to the idiosyncratic misspecification of the reward rather than the true signal that we are trying to extract. In this paper, we are interested in mitigating this effect. 

Given that some misspecification is inevitable, what should we focus on when defining a reward model? 
The basic setup of post-training aims to induce the good behavior encoded by the reward while minimally shifting other aspects of the base LLM. 
Mathematically, this can be formalized as looking for post-training procedures that move along the Pareto frontier of KL divergence from the base model vs win rate (as judged by the reward) against the base model. 
We begin by theoretically demonstrating that, for such Pareto-optimal procedures, the effect of reward misspecification is dominated by errors in the high-reward region. In other words, what really matters for post-training is the ability to accurately distinguish between the very good responses. 

Then, we know that we want to focus our reward modeling on the high-reward region of examples. The basic challenge here is that actually producing high-reward examples to train a reward model on is hard. If we simply sample responses from the base LLM itself, then it is extremely sample inefficient to get the necessary examples (because we are trying to get elements in a low-probability tail). On the other hand, if we use an off-policy procedure---e.g., drawing samples from a stronger LLM, or producing good examples with extensive thinking or rewrites---we can get high-reward examples, but naively training a reward model on them
may learn superficial features instead of eliciting real capabilities (see \Cref{app:rlhf}). %

To address this challenge, we empirically study \emph{rubric-based rewards} as a solution to this problem. In essence: we get very strong examplar responses by using off-policy generation. Then, we produce a reward model using these examples by using another LLM to produce a grading rubric for each prompt. Such rubric-based rewards will generalize well off-policy because they are insensitive to irrelevant aspects of the responses by design. The question is then if, and how, we can elicit rubrics that succeed in capturing the high-reward tail behavior necessary for alignment. We give two principles for achieving this goal. We then produce a workflow implementing these ideas and show empirically that it is highly effective for the LLM post-training task.

Summarizing, the contributions of this paper are:
\begin{enumerate}
    \item A theoretical characterization of \emph{how} reward misspecification matters for post-training, concluding that the high-reward region is key,
    \item A method for constructing effective reward rubrics using off-policy data, and
    \item An empirical study showing the efficacy of the constructed rubrics for post-training, and confirming the critical role of misspecification in the high-reward region.
\end{enumerate}

\begin{figure}
    \centering
    \includegraphics[width=0.8\linewidth]{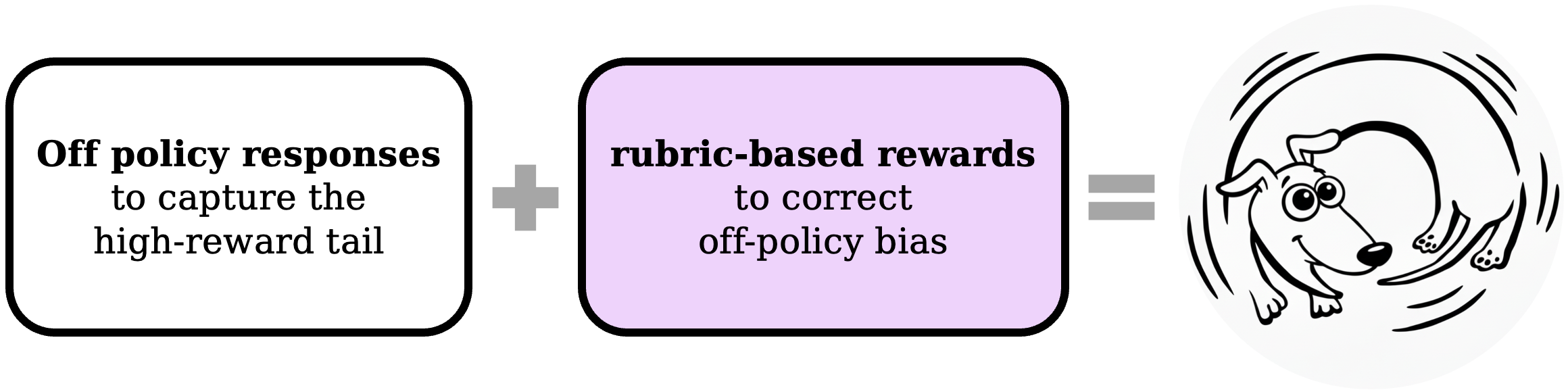}
    \caption{Chasing the Tail with Rubric-Based Rewards}
    \label{fig:placeholder}
    \vspace{-5pt}
\end{figure}

\section{Preliminaries}

\paragraph{Notations.}
We use $\pi$ to denote a large language model (LLM) and $\pi_0$ to denote the reference language model (usually the starting point of RL). Given a prompt $x$, a response $y$ is sampled from the conditional distribution $\pi(\cdot\mid x)$. A reward model $r(\cdot,\cdot)$ is utilized to assess the quality of a prompt-response pair. We use $r^\star$ to represent the gold reward model (inaccessible in practice) and $r$ to represent the proxy reward applied in practice.

\paragraph{Reinforcement fine-tuning (RFT).} With a prompt set $D$ and a reward model $r$, the reinforcement fine-tuning optimizes the following objective \citep{NEURIPS2022_b1efde53,bai2022training}:
\begin{equation}
\label{eq:rlhf-obj}
\max_{\pi}\mathbb{E}_{x\sim D,~y\sim\pi(\cdot\mid x)}\left[r(x,y)\right]
-\beta\mathbb{D}_{\mathrm{KL}}\left[\pi(y\mid x)\|\pi_0(y\mid x)\right],
\end{equation}
where $\beta$ is a hyperparameter to control fine-tuned model's deviation from the reference model, i.e.,
\[
\mathbb{D}_{\mathrm{KL}}\left[\pi(y\mid x)\|\pi_0(y\mid x)\right]
=\mathbb{E}_{x\sim D,y\sim\pi(\cdot\mid x)}\left[\log\frac{\pi(y\mid x)}{\pi_0(y\mid x)}\right].
\]
As demonstrated in \citet{rafailov2023direct}, the solution to \eqref{eq:rlhf-obj} is
\begin{equation}
\label{eq:rlhf-sol}
\pi_r(y\mid x)\propto \pi_0(y\mid x)\exp\{r(x,y)/\beta\}.
\end{equation}

\paragraph{Reward over-optimization.}
Because RFT relies on proxy rewards in practice, it inevitably suffers from \emph{reward over-optimization}: the policy exploits inaccuracies in the reward model, achieving high proxy scores while true quality deteriorates. This phenomenon has been well studied in Bradley-Terry reward models trained on human preference data \citep{gao2023scaling}. The standard remedy is online RLHF, where fresh human feedback is periodically collected to update the reward model and mitigate over-optimization \citep{bai2022training}, but such approaches are costly and slow.

\paragraph{Reinforcement learning from rubrics-based reward.}
Reinforcement learning from rubrics-based reward (RLRR) \citep{gunjal2025rubrics,viswanathan2025checklists,huang2025reinforcement} has emerged as a promising approach for open-ended tasks. The core idea is to associate each prompt $x$ with a rubric---a set of explicit criteria ($c_i$) with corresponding weights ($w_i$) that collectively define a high-quality response. For instance, given a prompt asking for a likely diagnosis from a patient's symptoms, the rubric could specify key aspects of a good answer. This might include high-weight criteria for ``identifying [correct diagnosis] as the likely diagnosis'' and ``correctly identifying the condition as a medical emergency,'' and a low-weight criterion for ``mentioning [typical treatment] for treatment'' (See \Cref{tab:rubric-example0} for a concrete example.)

In this framework, a verifier $V$, typically another LLM, assesses whether a given response $y$ satisfies each individual criterion. The total reward is then calculated as the weighted average of the criteria that the response successfully meets. Formally, the verifier outputs a binary score for each criterion, $V(x,y,c_i) \mapsto \{0,1\}$, and the total reward is:
\begin{align*}
    r(x,y) = \frac{\sum_i w_i V(x,y,c_i)}{\sum_i w_i}.
\end{align*}
RLRR extends Reinforcement Learning with Verifiable Rewards (RLVR) to general tasks where performance cannot be easily verified. Compared to RFT using Bradley-Terry reward models, RLRR's explicit criteria make rewards more interpretable and harder to game. However, it's still unclear if, and how, RLRR alleviates reward over-optimization.

\section{High-Reward Region Accuracy is Key to Overcoming Reward Over-optimization}\label{sec:theory}

\begin{figure}[t!]
     \centering
     \begin{subfigure}[h]{0.4\textwidth}
         \centering
         \includegraphics[width=\textwidth]{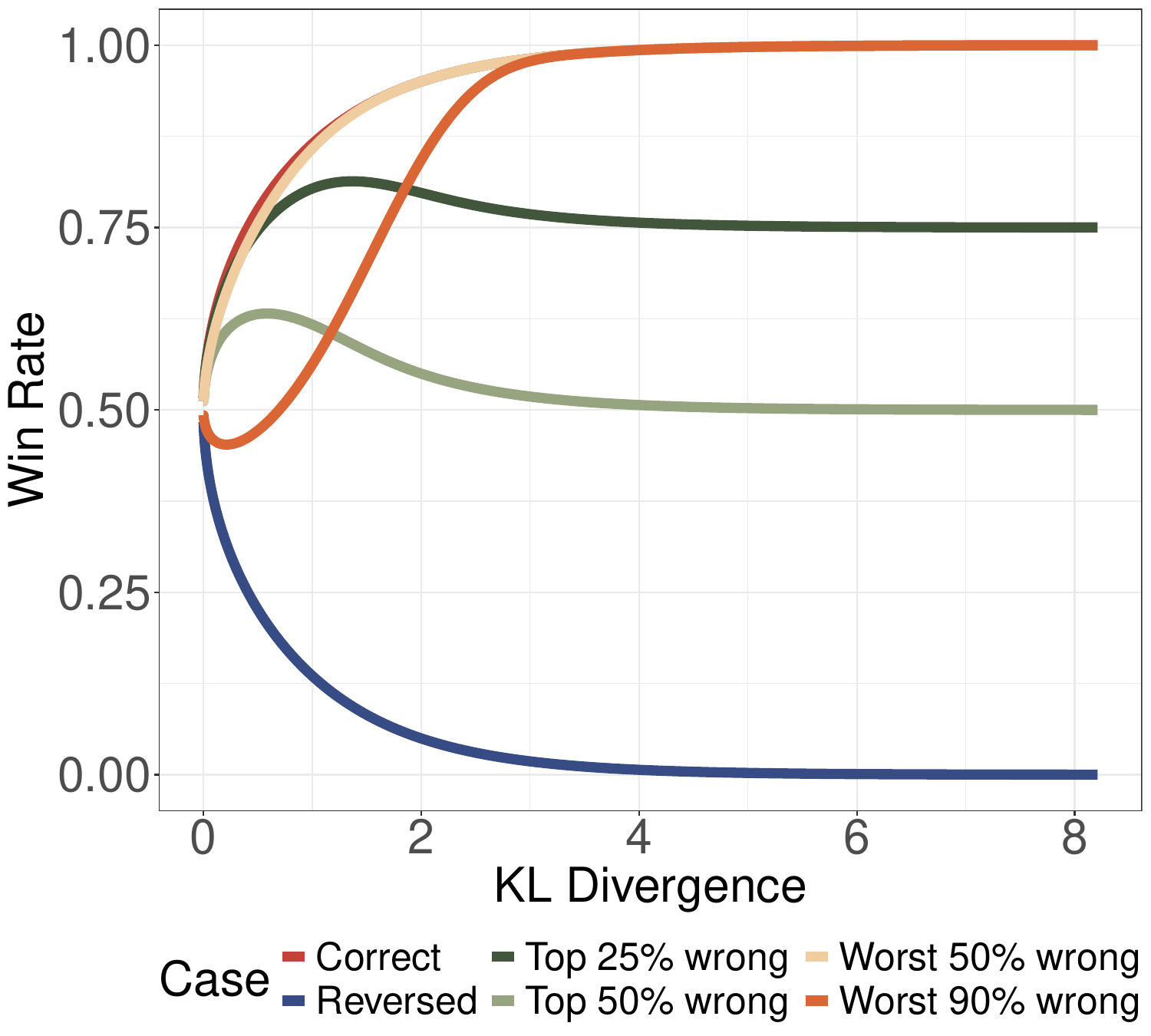}
         \caption{Win rate with reward misspecification}
         \label{fig:win-rate-with-misspecified-reward}
     \end{subfigure}
     \begin{subfigure}[h]{0.4\textwidth}
         \centering
         \includegraphics[width=\textwidth]{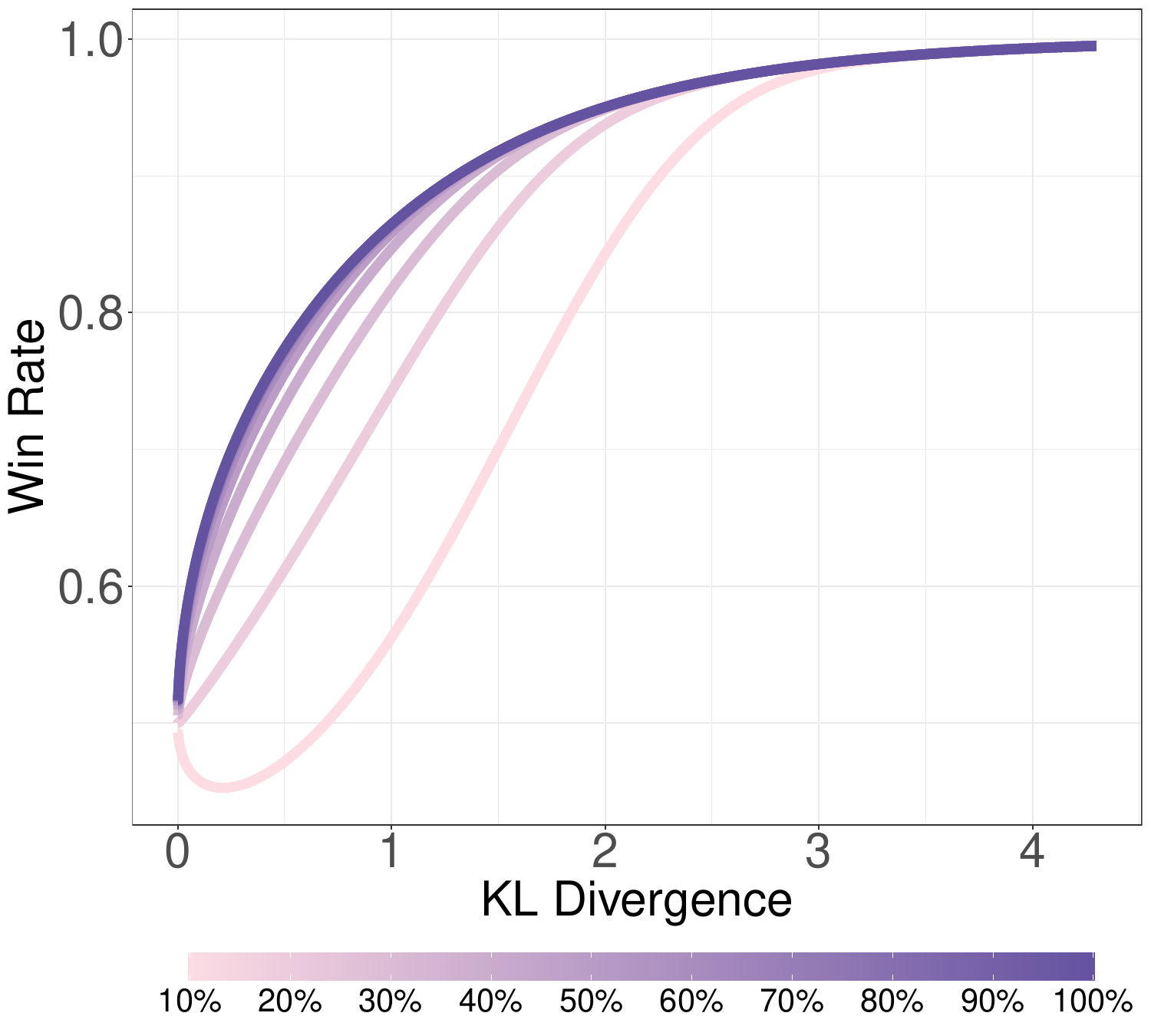}
         \caption{Win rate when different proportions of top responses are correctly ranked}
         \label{fig:win-rate-lower-bound}
     \end{subfigure}
\caption{Theoretical impact of reward model misspecification on performance. (a) Inaccuracy in the high-value region causes performance to collapse. (b) Correctly ranking top responses is sufficient for near-optimal performance.}
\label{fig:reward-misspecification}
\vspace{-5pt}
\end{figure}

It's well-known that using misspecified proxy rewards lead to reward over-optimization for reinforcement post-training. 
However, the ways in which different misspecification patterns of proxy rewards influence the performance of the aligned model remain poorly understood.
In this section, we develop theoretical results showing that maintaining high-reward region accuracy is the key determinant of alignment quality.

We introduce a \emph{misspecification mapping} $f$ from gold to proxy rewards and cast the problem as analyzing how the geometry of $f$ affects performance. More specifically, $f:\mathbb{R}\to\mathbb{R}$ is the mapping from $r^\star$ to $r$, i.e., for any $x$-$y$ pair,
\[
f\left(r^\star(x,y)\right)=r(x,y).
\]

To characterize the reward over-optimization phenomenon, we need to study the relationship between the utility (expected reward and win rates), and the KL divergence in \eqref{eq:rlhf-sol}. 
They can be simplified as follows:
\begin{proposition}
\label{prop:reward-and-kl-of-rlhf-sol}
Define $R^x_0=r^\star(x,Y_0)$ with $Y_0\sim\pi_0(\cdot\mid x)$ and $F^x_0$ as its cumulative distribution function. The RFT solution \eqref{eq:rlhf-sol} has:
\begin{enumerate}[leftmargin=*,label=(\roman*)]
    \item Expected reward: $\mathbb{E}_{x\sim D,~y\sim\pi_r(\cdot\mid x)}\left[r^\star(x,y)\right]
    =\mathbb{E}_{x\sim D}\left[\frac{\mathbb{E}\left[R^x_0~e^{f(R^x_0)/\beta}\right]}{\mathbb{E}\left[ e^{f(R^x_0)/\beta}\right]}\right]$,
    \item Win Rate: $\mathbb{E}_{x\sim D,~y\sim\pi_r(\cdot\mid x)}\left[F_0^x\left(r^\star(x,y)\right)\right]
    =\mathbb{E}_{x\sim D}\left[\frac{\mathbb{E}\left[F^x_0(R^x_0)~e^{f(R^x_0)/\beta}\right]}{\mathbb{E}\left[ e^{f(R^x_0)/\beta}\right]}\right]$,
    \item KL divergence: 
    $ \mathbb{D}_{\mathrm{KL}}\left[\pi_r(y\mid x)\|\pi_0(y\mid x)\right]
    =\mathbb{E}_{x\sim D}\left[\frac{\mathbb{E}\left[f(R^x_0)~e^{f(R^x_0)/\beta}/{\beta}\right]}{\mathbb{E}\left[ e^{f(R^x_0)/\beta}\right]}-\log\mathbb{E}\left[ e^{f(R^x_0)/\beta}\right]\right]$
\end{enumerate}
\end{proposition}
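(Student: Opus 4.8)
The plan is to rewrite the RFT solution \eqref{eq:rlhf-sol} with its normalizing constant made explicit and then reduce every quantity in the statement to a one-dimensional integral against the law of $R_0^x$. Write $\pi_r(y\mid x) = \pi_0(y\mid x)\exp\{r(x,y)/\beta\}/Z(x)$ where $Z(x) = \mathbb{E}_{Y_0\sim\pi_0(\cdot\mid x)}[\exp\{r(x,Y_0)/\beta\}]$. The one identity that drives all three parts is a change of variables: since $r = f\circ r^\star$ and, by construction, $R_0^x = r^\star(x,Y_0)$ for $Y_0\sim\pi_0(\cdot\mid x)$, for any measurable $g$ (with suitable integrability) we have $\mathbb{E}_{Y_0\sim\pi_0(\cdot\mid x)}[g(r^\star(x,Y_0))\exp\{r(x,Y_0)/\beta\}] = \mathbb{E}[g(R_0^x)\,e^{f(R_0^x)/\beta}]$, and in particular, taking $g\equiv 1$, $Z(x) = \mathbb{E}[e^{f(R_0^x)/\beta}]$.

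With this in hand each part is a short computation. For (i), taking $g(t)=t$ gives $\mathbb{E}_{y\sim\pi_r(\cdot\mid x)}[r^\star(x,y)] = Z(x)^{-1}\mathbb{E}_{Y_0\sim\pi_0}[r^\star(x,Y_0)\exp\{r(x,Y_0)/\beta\}] = \mathbb{E}[R_0^x e^{f(R_0^x)/\beta}]/\mathbb{E}[e^{f(R_0^x)/\beta}]$, and averaging over $x\sim D$ yields the claim. For (ii), one first unwinds the definition of the win rate against $\pi_0$: it is $\mathbb{E}_{x\sim D}\,\mathbb{E}_{y\sim\pi_r(\cdot\mid x),\ y'\sim\pi_0(\cdot\mid x)}[\mathbf{1}\{r^\star(x,y')\le r^\star(x,y)\}]$; integrating out $y'$ while holding $y$ fixed turns the inner expectation into $F_0^x(r^\star(x,y))$, so the win rate equals $\mathbb{E}_{x\sim D}\,\mathbb{E}_{y\sim\pi_r(\cdot\mid x)}[F_0^x(r^\star(x,y))]$, and we finish by applying the change of variables with $g = F_0^x$. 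For (iii), note $\log(\pi_r(y\mid x)/\pi_0(y\mid x)) = r(x,y)/\beta - \log Z(x)$, so the per-prompt KL equals $\tfrac1\beta\mathbb{E}_{y\sim\pi_r(\cdot\mid x)}[r(x,y)] - \log Z(x)$; applying the change of variables with $g = f$ to the first term and substituting $Z(x) = \mathbb{E}[e^{f(R_0^x)/\beta}]$ into the second, then averaging over $x\sim D$ (recall $\mathbb{D}_{\mathrm{KL}}$ in the statement already bundles the $x$-average), gives the formula.

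The computations are routine, so the only points needing care are definitional/regularity ones: (a) pinning down the win-rate convention so that part (ii)'s reduction to $F_0^x$ is exactly right — the tie-breaking choice is immaterial whenever $F_0^x$ is continuous but should be stated; and (b) a mild integrability assumption ($0 < Z(x) < \infty$ and $\mathbb{E}[|R_0^x|e^{f(R_0^x)/\beta}] < \infty$ for $D$-a.e.\ $x$) to license the expectations and the interchange of the $x$-average with the inner conditional expectations, which is automatic when the gold reward is bounded, the regime of interest. I do not expect a substantive obstacle beyond this bookkeeping.
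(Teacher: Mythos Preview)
Your proposal is correct and follows the natural change-of-measure argument: writing $\pi_r(y\mid x)=\pi_0(y\mid x)e^{r(x,y)/\beta}/Z(x)$, using $r=f\circ r^\star$ to push everything forward to the law of $R_0^x$, and then specializing $g$ to $t$, $F_0^x$, and $f$ for the three parts. The paper does not give a separate proof of this proposition---it is stated as a direct consequence of the closed-form RFT solution \eqref{eq:rlhf-sol} and is immediately used in the proof of \Cref{thm:kl-wr-1}---so your argument is exactly the routine computation the paper takes for granted.
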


To proceed, we assume the current policy's ground-truth reward, $R_0^x$, is distributed from the standard uniform.  This assumption is valid since: (i) it matches the reward distribution of best-of-n sampling and the optimal solution which best balance KL divergence and win rate \citep{gui2024bonbon,azar2024general,balashankar2024infalign} , and (ii) win rate and expected reward matches each other in this case. Under this assumption, we can characterize the utility-KL tradeoff when applying the misspecifed rewards:
\begin{restatable}{theorem}{RealWinRate}
\label{thm:kl-wr-1}
Suppose each $R_0^x\sim U(0,1)$ and $f(R_0^x)\overset{d}{=}R_0^x$. Then it holds that:
\begin{enumerate}[label=(\roman*)]
    \item KL divergence is invariant to $f$:
    $$\mathbb{D}_{\mathrm{KL}}\left[\pi_r(y\mid x)\|\pi_0(y\mid x)\right]=\frac{(1/\beta-1)e^{1/\beta}+1}{e^{1/\beta}-1}-\log\beta-\log(e^{1/\beta}-1).$$
    \item Expected reward (or win rate) of $\pi_r$ is $\frac{\int_0^1f^{-1}(u) e^{u/\beta}\mathrm{d}u}{\beta\left(e^{1/\beta}-1\right)}$. [\hyperref[subsec:proof-thm-1]{Proof}].
\end{enumerate}  
\end{restatable}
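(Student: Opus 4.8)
The plan is to feed the two assumptions of the statement straight into \Cref{prop:reward-and-kl-of-rlhf-sol} and collapse everything to one-dimensional integrals against the $U(0,1)$ density. The single fact doing all the work is that $f(R_0^x)\overset{d}{=}R_0^x\sim U(0,1)$, so for any integrable $g$, $\mathbb{E}\!\left[g(f(R_0^x))\right]=\int_0^1 g(u)\,\mathrm{d}u$. In particular the denominator common to all three expressions of \Cref{prop:reward-and-kl-of-rlhf-sol} becomes $\mathbb{E}\!\left[e^{f(R_0^x)/\beta}\right]=\int_0^1 e^{u/\beta}\,\mathrm{d}u=\beta(e^{1/\beta}-1)$, which carries no information about $f$. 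Since the assumptions hold for every prompt $x$ and the resulting per-prompt quantities depend only on $f$ and $\beta$, the outer average $\mathbb{E}_{x\sim D}$ in \Cref{prop:reward-and-kl-of-rlhf-sol} passes through trivially, so it suffices to work per prompt.

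For part (i), apply the same substitution to the numerator of the KL expression: $\mathbb{E}\!\left[f(R_0^x)e^{f(R_0^x)/\beta}/\beta\right]=\tfrac1\beta\int_0^1 u\,e^{u/\beta}\,\mathrm{d}u$, which a single integration by parts evaluates to $e^{1/\beta}-\beta(e^{1/\beta}-1)$. Dividing by $\beta(e^{1/\beta}-1)$ and subtracting $\log\!\big(\beta(e^{1/\beta}-1)\big)$ gives $\frac{e^{1/\beta}}{\beta(e^{1/\beta}-1)}-1-\log\beta-\log(e^{1/\beta}-1)$, and rearranging the first two terms over the common denominator $e^{1/\beta}-1$ yields exactly $\frac{(1/\beta-1)e^{1/\beta}+1}{e^{1/\beta}-1}-\log\beta-\log(e^{1/\beta}-1)$. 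Since only $f(R_0^x)\overset{d}{=}U(0,1)$ was used, the KL is the same for every admissible $f$, which is the claimed invariance.

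For part (ii), the shape of $f$ enters only through the numerator of the expected-reward formula, which I would handle by conditioning on the realized proxy value: $\mathbb{E}\!\left[R_0^x e^{f(R_0^x)/\beta}\right]=\mathbb{E}\!\left[\,\mathbb{E}[R_0^x\mid f(R_0^x)]\,e^{f(R_0^x)/\beta}\right]$, and then use $f(R_0^x)\sim U(0,1)$ to rewrite this as $\int_0^1 \mathbb{E}[R_0^x\mid f(R_0^x)=u]\,e^{u/\beta}\,\mathrm{d}u$. Denoting $f^{-1}(u):=\mathbb{E}[R_0^x\mid f(R_0^x)=u]$ — which is just the ordinary inverse when $f$ is injective on $[0,1]$, and is well-defined in general since $f(R_0^x)$ has a density — and dividing by the denominator already computed gives $\frac{\int_0^1 f^{-1}(u)e^{u/\beta}\,\mathrm{d}u}{\beta(e^{1/\beta}-1)}$. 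For the win-rate claim, note that $R_0^x\sim U(0,1)$ makes its own CDF $F_0^x$ the identity on $[0,1]$, so $F_0^x\!\left(r^\star(x,y)\right)=r^\star(x,y)$ on the support of $\pi_r(\cdot\mid x)$; hence the win-rate formula of \Cref{prop:reward-and-kl-of-rlhf-sol} is term-for-term the expected-reward formula and returns the same value.

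The only part requiring care is the meaning of $f^{-1}$: if the theorem is to cover genuinely non-monotone misspecification (the case of real interest, since an increasing $f$ preserving $U(0,1)$ is forced to be the identity), then $f^{-1}$ must be read as the conditional expectation above rather than a pointwise inverse, and one should record that this is consistent with the injective case. Beyond that the argument is routine: one substitution, one integration by parts, and an algebraic tidy-up whose only hazard is a stray factor of $\beta$ or a sign.
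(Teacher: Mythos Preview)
Your argument is correct and tracks the paper's proof closely: both plug the hypotheses into \Cref{prop:reward-and-kl-of-rlhf-sol}, reduce the inner expectations to $U(0,1)$ integrals, and evaluate. The one substantive difference is your handling of part~(ii). The paper silently assumes $f$ is invertible and writes $R_0^x=f^{-1}(T_0^x)$ with $T_0^x=f(R_0^x)$ as a direct change of variable; you instead use the tower property and \emph{define} $f^{-1}(u):=\mathbb{E}[R_0^x\mid f(R_0^x)=u]$, which agrees with the pointwise inverse in the injective case but remains well-defined otherwise. This is a genuine improvement in rigor, since the theorem statement does not assume injectivity and, as you note, the only increasing $f$ satisfying the hypotheses is the identity---so the cases of interest are non-monotone. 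The paper's concrete examples happen to be piecewise-linear bijections a.e., so the distinction does not bite there, but your formulation is the right one at the stated level of generality.
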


The explicit formula in \cref{thm:kl-wr-1} indicates that misspecification, i.e., the deviation of $f$ from the identity map, in the high-value region of $r^\star$ has dominantly large effects on the utility-KL tradeoff. On one hand, the KL divergence remains invariant to the choice of $f$ and is fixed when the penalty parameter $\beta$ is set. On the other hand, the exponential term imposes increasingly severe penalties on misspecification in the high-reward regime relative to the low-reward regime. This highlights the criticality of accuracy in the high-reward region for achieving a favorable balance between utility and KL divergence.

To verify this, we investigate different $f$s and exactly compute the utility-KL tradeoff curves:
\begin{enumerate}[label=(\roman*)]
    \item ``Correct'': identity mapping $f(r^\star)=r^\star$
    \item ``Reversed'': the reverse mapping $f(r^\star)=1-r^\star$
    \item `` Top $c$\% wrong'': $r=f(r^\star)=r^\star1_{\{r^\star\le 1-c\}}+(2-c-r^\star)1_{\{r^\star>1-c\}}$, i.e., the proxy reward model provides completely reverse rewards for highest quality responses
    \item ``Worst $c$\% wrong'': $r=f(r^\star)=(c-r^\star)1_{\{r^\star\le c\}}+r^\star 1_{\{r^\star>c\}}$, i.e., the proxy reward model provides completely reverse rewards for worst quality responses
\end{enumerate}
\Cref{fig:win-rate-with-misspecified-reward}~plots KL divergence versus win rate across misspecification patterns and yields two key observations: 
(i) when the proxy is inaccurate in the \emph{high-reward} region, performance may look acceptable at small KL but the win rate collapses as KL grows (this is similar to the reward over-optimization behavior in \cite{gao2023scaling}); 
and (ii) if the proxy correctly ranks just a small top proportion of responses (e.g., \(10\%\)), even while misgrading the remaining majority, the win rate rapidly approaches the optimal curve at moderate KL. Separately, \Cref{fig:win-rate-lower-bound} varies the fraction \(c\) of correctly ranked top responses and traces the corresponding lower envelope of achievable win rates, showing that this envelope is already near-optimal once a sufficiently large top proportion is correctly identified and ordered (e.g., \(40\%\)).
Together, we reach our central theoretical findings: 
\begin{enumerate}[label=( \Roman* )]
    \item \emph{Reward over-optimization primarily arises from the inaccuracy in high-reward regions.}
    \item \emph{Being able to accurately rank and differentiate high-quality outputs is sufficient for a reward model to effectively guide RL.}
\end{enumerate}

\begin{figure}[t!]
    \centering
    \begin{subfigure}[h]{0.49\textwidth}
        \centering
        \includegraphics[width=\textwidth]{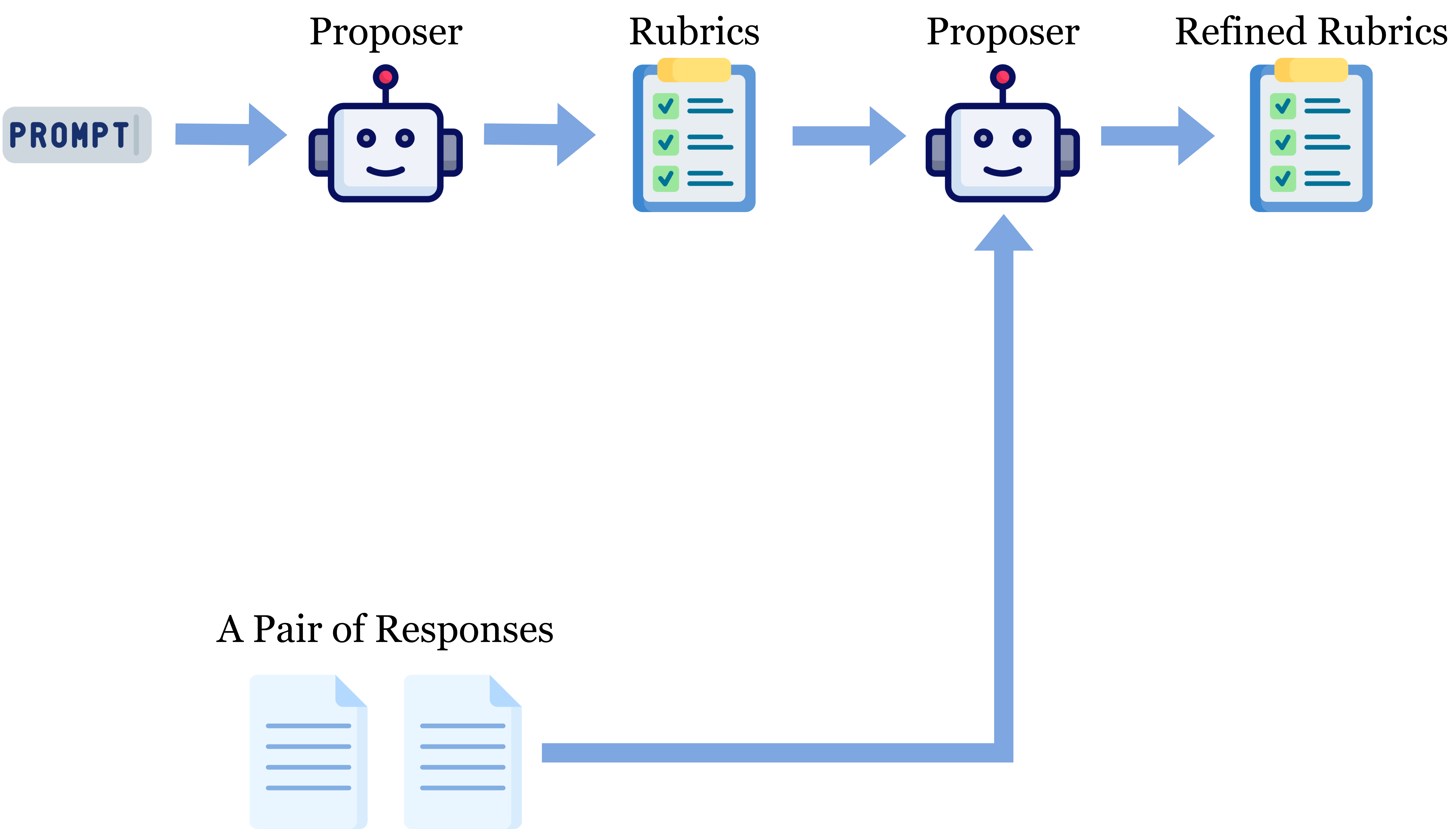}
        \caption{Single-round Improvement}
        \label{fig:single}
    \end{subfigure}
    \hfill
    \begin{subfigure}[h]{0.49\textwidth}
        \centering
        \includegraphics[width=\textwidth]{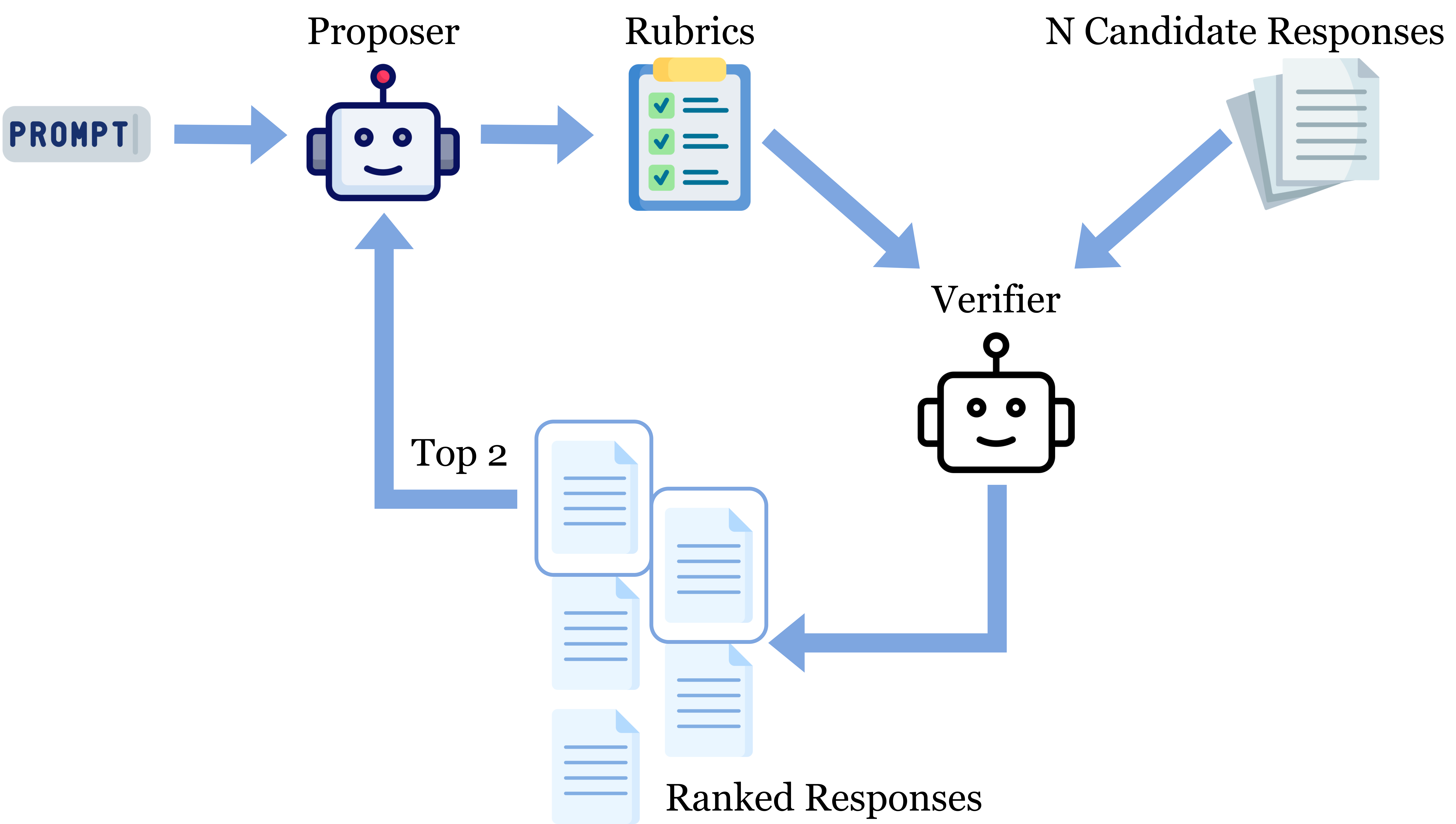}
        \caption{Iterative Improvement}
        \label{fig:iterative}
    \end{subfigure}
\caption{Rubric refinement through response differentiation. (a) Single-round: A proposer LLM analyzes a pair of responses to identify distinguishing features and encodes them as new rubric criteria. (b) Iterative: Multiple rounds progressively focus on higher-quality responses, with each iteration filtering to top-scoring candidates before generating new differentiating rubrics.}
\label{fig:workflow}
\end{figure}

\begin{algorithm}[h!]
\caption{Iterative Rubric Refinement through Progressive Differentiation}
\label{alg:iterative_refinement}
\begin{algorithmic}[1]
\State \textbf{Input:} Pool of candidate responses and initial rubrics
\State \textbf{Iteration:} For each refinement round:
\begin{itemize}
    \item[(a)] Score all candidate responses with the current rubrics and get the top 2 responses from the candidate pool as the comparison pair.
    \item[(b)] Use the proposer LLM to identify distinguishing features between the pair and encode these features by refining the existing rubric set.
\end{itemize}
\State \textbf{Output:} Final refined rubric set
\end{algorithmic}
\end{algorithm}

\section{Principles for Constructing Rubrics}
Based on the results of the previous section, we construct a reward model focusing on the high-value region. 
The problem then is getting training examples that are in this high-reward region. By definition, these are samples that are rare under the base LLM policy!
This essentially forces us to use off-policy data to define the reward model. Now, \emph{rubric-based rewards} have emerged as an approach for using off-policy data to define rewards. The basic idea of rubric-based reward models is to explicitly restrict the reward to only care about aspects of the solution that are relevant to its quality, thereby mitigating the side effect of the off-policy data. However, the restrictive nature of the rubrics is a double-edged sword. The same structure that limits the effect of off-policyness may also limit their ability to distinguish between solutions that are \Excellent and those that are merely \Great (they can easily end up in a tie). In this section, we consider how to construct rubrics that are focused on accuracy in the high-reward region.

Refining rubrics to reliably tell apart two already \Great responses is a natural first step toward capturing the high-reward tail. To push accuracy further in that tail, we also update rubrics to distinguish among a \emph{diverse} set of \Great responses. We formalize these ideas as two principles for rubric construction

\begin{mybox}[colback=cyan!5]
\textbf{Principles for Rubric Construction}
\begin{enumerate}[label={[\texttt{Principle \arabic*}]}, labelsep=1em, leftmargin=*, align=right]
    \item \texttt{Effective rubric construction requires distinguishing \Excellent responses from \Great ones.}
    \item \texttt{Effective rubric construction requires distinguishing among \emph{diverse} off-policy responses.}
\end{enumerate}
\end{mybox}

\begin{table}[t!]
\centering
\caption{RL experimental results across three domains. The base policy model is Qwen3-8B-Base, and the win rate is evaluated against Qwen3-8B. The results show two clear trends: refining rubrics by differentiating two \Great responses outperforms differentiating two \Good responses, and differentiating among a \emph{diverse} set of \Great responses further improves performance.}
\label{tab:main_results_final}
\sisetup{detect-weight=true, detect-family=true} 

\begin{tabular}{l ccc S[table-format=1.4] cc}
\toprule
\multirow{3}{*}{\textbf{Method}} & \multicolumn{2}{c}{\textbf{Generalist Domain}} & \multicolumn{2}{c}{\textbf{Health Domain}} & \multicolumn{2}{c}{{\textbf{Finance Domain}}} \\
\cmidrule(lr){2-3} \cmidrule(lr){4-5} \cmidrule(lr){6-7} %

& \textbf{Filtered Set} & \textbf{LMArena} & \multicolumn{2}{c}{\textbf{Medical-o1}} & \multicolumn{2}{c}{{\textbf{Finance}}} \\
\cmidrule(lr){2-2} \cmidrule(lr){3-3} \cmidrule(lr){4-5} \cmidrule(lr){6-7} %

& \textbf{Win\%} & \textbf{Win\%} & \textbf{Win\%} & {\textbf{Score}} & {\textbf{Win \%}} & {\textbf{Score}} \\
\midrule

Base Policy & 5.2 & 4.1 & 10.8 & 0.1721 & 5.8 & 0.1738\\
SFT & 35.9 & 29.6 & 25.8 & 0.2999 & 26.0 &  0.2218\\
\midrule
\addlinespace[0.5em]
\addlinespace[0.2em]
Initial, Prompt only & 31.3 & 29.7 & 21.7 & 0.3004 & 37.2 & 0.2693 \\
1 \Good Pair & 33.5 & 32.8 & 22.4 & 0.2912 & 39.1 & 0.2694 \\
1 \Great Pair & \textbf{36.8} & \textbf{33.1} & \textbf{26.5} & \textbf{0.3163} & \textbf{42.2} & \textbf{0.2838} \\
\midrule
\addlinespace[0.5em]
\addlinespace[0.2em]
    4 \Great Pairs & 38.7 & 34.7 & 31.4 & 0.3348 & 48.9 &  0.2961 \\
4 \Great \& \textit{Diverse} Pairs & \textbf{39.7} & \textbf{35.1} & \textbf{34.4} & \textbf{0.3513} & \textbf{49.6} & \textbf{0.3018} \\
\bottomrule
\end{tabular}
\label{tab:main_results}
\end{table}

\subsection{Methodology}
To operationalize the above principles, we design an iterative workflow that leverages off-policy responses to refine rubrics.

\paragraph{Refinement-through-Differentiation (RTD).}
A natural way to make rubric-rewards more discriminative is to prompt a proposer LLM with a pair of \emph{candidate responses} and the current rubrics. The proposer analyzes the pair, identifies their distinguishing features, and encodes these distinctions as new rubric criteria or refinements of existing ones. We refer to this fundamental refinement step as \emph{Refinement-through-Differentiation} (RTD).

\paragraph{Iterative workflow for chasing the tail.}
While a single RTD step sharpens the rubric, repeated application over a larger candidate pool yields systematic improvements. Starting with all off-policy responses for a prompt, each iteration scores the candidates under the current rubric, selects the top two responses, and refines the rubric using RTD. This workflow concentrates rubric discovery on the performance frontier, extracting the most informative distinctions from the best available responses with only a small number of comparisons (see \Cref{alg:iterative_refinement} and \Cref{fig:workflow}).

\subsection{Experimental Setup}
{Our experimental goals are twofold. First, we examine how leveraging off-policy responses can alleviate reward over-optimization. Second, we assess the efficacy of these methods in enhancing LLM capabilities. Our experiments span three domains: general-purpose, healthcare, and finance. For the first goal, to isolate the effect of rubric refinement strategies, we adopt the synthetic oracle setting proposed by the seminal work \citet{gao2023scaling}. In this framework, a strong LLM acts as a proxy for the ``gold-standard'' preference source. This oracle model serves a dual purpose: it generates the rubrics data used for reward modeling and serves as the final judge for performance evaluation. By unifying the annotation and evaluation sources, this design eliminates confounding factors arising from preference mismatch between the training data and the judge. This allows us to clearly assess how effectively different rubric refinement strategies mitigate reward over-optimization. For the second goal, we evaluate the model performance on domain-specific objective benchmarks when applicable. Regarding the second goal, we specifically evaluate the model on healthcare and finance tasks. These fields offer established objective professional benchmarks, allowing us to rigorously test improvements in model capabilities. We set up the experiments as follows:}

\paragraph{Training setup.}
We employ GPT-4.1 as the \emph{rubric proposer}, prompting it to generate the \emph{initial rubrics}.
The \emph{training datasets} consist of two generalist prompt collections (LMArena~\citep{chiang2024chatbot} and a manually filtered set of natural prompts, detailed in \Cref{app:data_filtering}) and two domain-specific prompt sets: for healthcare, we utilize medical-o1-reasoning-SFT~\citep{chen2024huatuogpto1medicalcomplexreasoning}; for the finance domain, we filtered for 1147 high-quality finance prompts in \cite{LMArenaTeam2025arena}.
Each dataset contributes 5000 prompts for training and an additional 1000 prompts for in-domain evaluation, with the exception of the finance dataset, for which we use all available prompts for training and the prompts from the PRBench-Finance~\citep{akyurek2025PRBench} for the evaluation.
The \emph{base model} for post-training is Qwen3-8B-Base~\citep{yang2025qwen3}, which has instruction-following capabilities. We adopt GRPO~\citep{shao2024deepseekmath} as the RFT algorithm and use a standard set of hyperparameters, detailed in Table~\ref{tab:hyperparameters}. 
For the reward computation, we leverage GPT-4.1-mini as a \emph{rubric verifier} and calculate the final reward as the weighted sum of satisfied rubric criteria, normalized by the total weight. All prompts used in the experiments are presented in \Cref{app:prompts}.

\paragraph{Candidate pool.} 
To validate \texttt{Principle 1}, we compare rubrics refined using (i) candidate pairs from a \Great model versus (ii) candidate pairs from a \Good model (Gemini 2.5 Pro and Gemini-2.5-Flash-Lite, respectively~\citep{comanici2025gemini}). 
To validate \texttt{Principle 2}, we enlarge the pool by sampling 16 responses per prompt, from a broader set of \Excellent models, ensuring greater diversity (see \Cref{app:models} for the full list). 
This setup allows us to test whether rubric refinement benefits from better and more diverse candidate responses. The supervised fine-tuning (SFT) uses two responses per prompt, sampled from the \Great model.

{\paragraph{Evaluation.} We assess performance across two primary dimensions: (1) alignment with oracle preferences, and (2) domain-specific scores on professional benchmarks. For preference evaluation, we conduct head-to-head comparisons against Qwen3-8B, the strong thinking version of our base policy model, on a held-out set of test prompts. The oracle model GPT-4.1 was prompted to act as an impartial evaluator and select the better response (see Appendix~\ref{app:llm-as-judge} for the detail). To validate the performance gain in professional domains, we additionally evaluate models on HealthBench~\citep{arora2025healthbench} and PRBench~\citep{akyurek2025PRBench}, which provide objective metrics grounded in expert-curated rubrics.}

\section{Results}

\subsection{RL improves with better and more diverse responses}
We first evaluate downstream RL performance to test whether the proposed principles indeed improve rubrics. 
\Cref{tab:main_results} shows two clear trends. 
First, rubrics refined with \Great pairs outperform those refined with \Good pairs, validating \texttt{Principle 1}. 
Second, iterative refinement with multiple diverse \Great pairs yields further gains, validating \texttt{Principle 2}.

\begin{figure}[h!]
    \centering
    \begin{subfigure}[b]{\textwidth}
        \centering
        \includegraphics[width=0.9\textwidth]{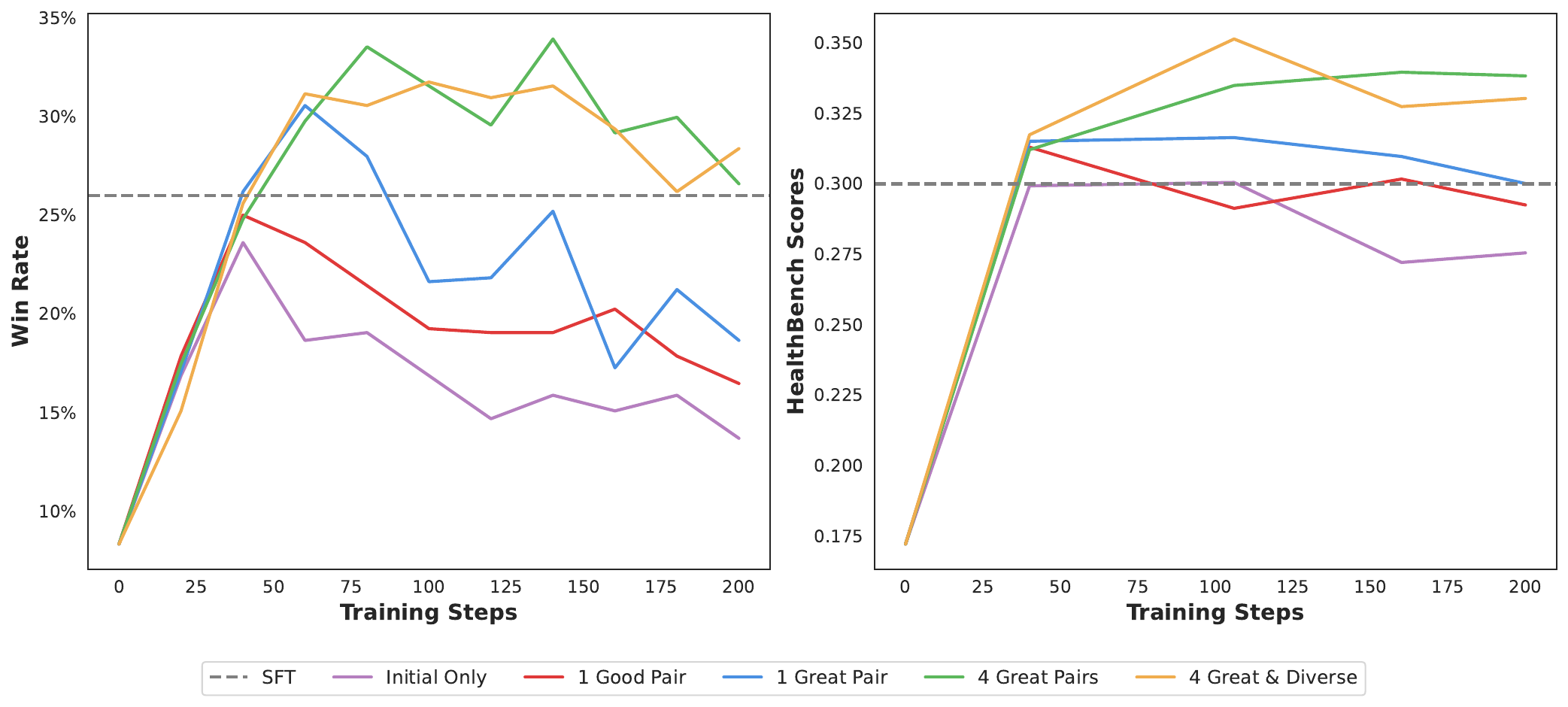}
        \caption{Healthcare: Win Rate and HealthBench Scores at Different Training Steps}
        \label{subfig:healthbench_scores}
    \end{subfigure}
    
    \vspace{1em} %

    \begin{subfigure}[b]{\textwidth}
        \centering
        \includegraphics[width=0.9\textwidth]{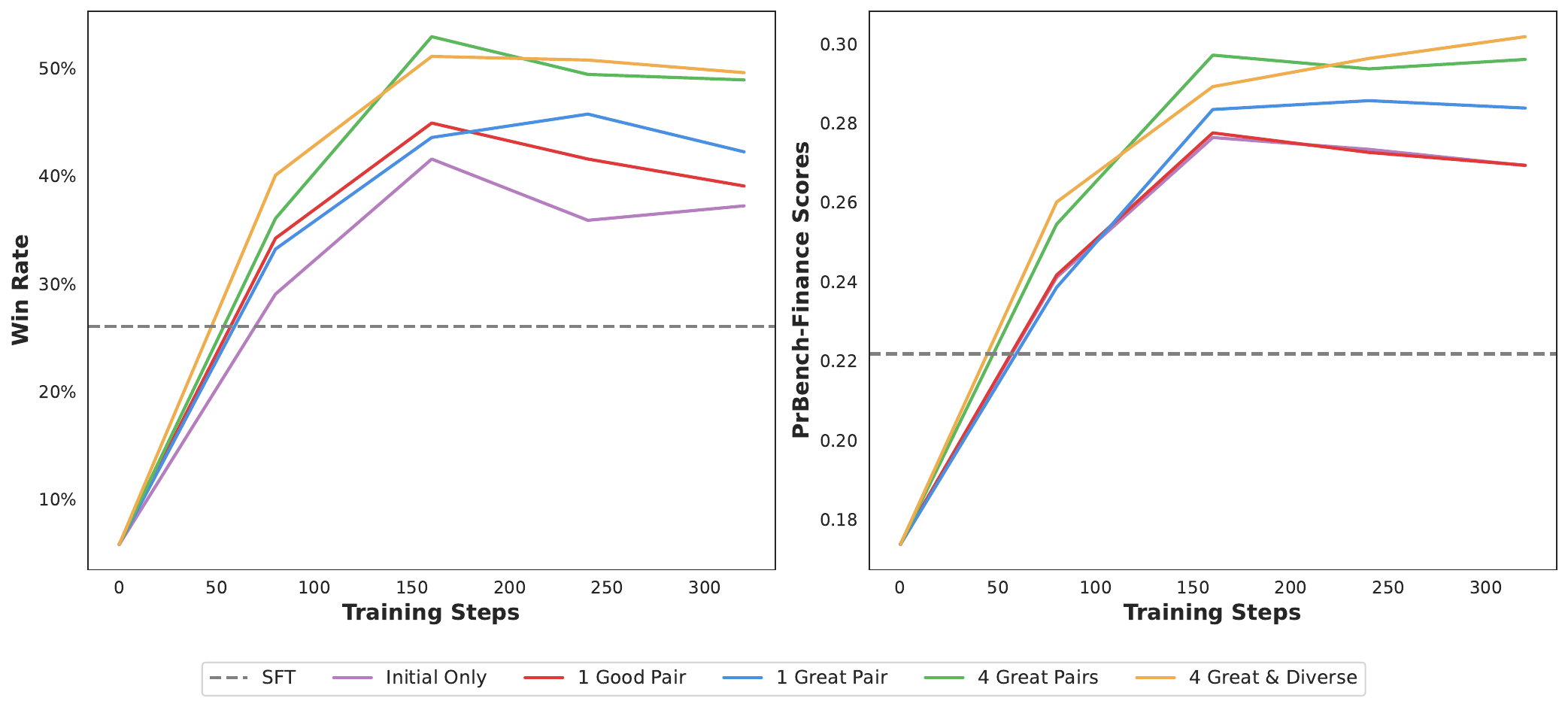}
        \caption{Finance: Win Rate and PRBench-Finance Scores at Different Training Steps}
        \label{subfig:finance_scores}
    \end{subfigure}
    \caption{Training dynamics under different rubric construction strategies over separate prolonged training. The figures show the evolution of the Win Rate and respective benchmark scores in the healthcare (\protect\subref{subfig:healthbench_scores}) and finance (\protect\subref{subfig:finance_scores}) domains.  
    }
    \label{fig:training_dynamics_comparison}
\end{figure}

Beyond improving average performance, refinement with better and more diverse responses also mitigates reward over-optimization. 
\Cref{fig:training_dynamics_comparison} shows training dynamics on the health and finance domains when RL is run for extended steps. 
Models trained on initial rubrics, or rubrics refined with only a single pair, peak early and then suffer a rapid decline in win rate and benchmark scores---an indicator of reward over-optimization. 
In contrast, models trained with iteratively refined, diverse rubrics sustain higher win rates and benchmark scores for much longer, with over-optimization not appearing until late stages. 
This pattern indicates that refining rubrics with \Great and \emph{diverse} responses corrects inaccuracies in the high-reward region, thereby delaying the onset of over-optimization. 
Together, these results confirm our central hypothesis that rubrics can be constructed to mitigate reward over-optimization.

\subsection{Reward model accuracy improves in the high-reward tail}
Our theoretical analysis (\Cref{sec:theory}) suggests that accuracy in the high-reward tail is the critical factor for downstream RL performance. 
To understand why refinement with better and more diverse responses helps, we evaluate the agreement between rubric-based rewards and the ground-truth judge, separately on the high- and low-reward regions.

As shown in \Cref{tab:accuracy_results}, incorporating any candidate responses through refinement improves rubric accuracy compared to the prompt-only baseline. More importantly, rubrics refined with \Great pairs largely improve accuracy in the high-reward region, while \Good pairs improve accuracy more than \Great pairs in the low-reward region. Iterative refinement with \Great pairs pushes the accuracy in the high-reward region even further, mirroring the RL improvements in \Cref{tab:main_results}. This confirms that both principles work by sharpening reward model accuracy where it matters most: the high-value tail.

\begin{table}[t!]
\centering
\small
\caption{Accuracy of rubric-based scoring in predicting ground-truth model preferences was evaluated on 1000 random prompts from the training set. Response pairs in the high-reward region were sampled from Qwen3-8B, and response pairs in the low-reward region were sampled from Qwen3-8B-Base. Rubric preferences were determined by a majority vote from five independent gradings, \textbf{with ties counted as incorrect}. Results how refining with stronger and more diverse responses improves high-reward accuracy. }
\begin{tabular}{lccc|cc}
\toprule
& Initial Only & 1 \emph{Good} Pair & 1 \emph{Great} Pair & 4 \emph{Great} Pairs & 4 \emph{Great} \& \emph{Diverse} Pairs \\
\midrule
High-reward & 40.3\% & 42.2\% & \textbf{45.8}\% & \textbf{49.2}\% & 47.9\% \\
Low-reward  & 66.2\% & \textbf{67.9}\% & 66.7\% & 68.9\% & \textbf{69.8}\%\\
\bottomrule
\end{tabular}
\label{tab:accuracy_results}
\vspace{-10pt}
\end{table}

\subsection{Refinements from better responses are more sophisticated}
Finally, we analyze how refinements differ when using \Good versus \Great candidate responses. 
To understand how stronger candidate responses lead to better rubrics, we analyzed the types of refinements made when using different quality levels of candidate pairs. We prompted an LLM to compare initial and refined rubrics, and categorized the improvements into semantic clusters (see details in \Cref{sec:clustering-detail}). 

\begin{table}[!b]
\centering
\scriptsize
\caption{Distribution of rubric refinement types when using \Great (Gemini 2.5 Pro) versus \Good (Gemini 2.5 Flash Lite) candidate pairs, in the healthcare domain. Rows with significant differences ($\geq$55\% for one model) are highlighted: \colorbox{blue!8}{blue} indicates \Great dominance, \colorbox{red!8}{red} indicates \Good dominance. Bold percentages show the dominant model.}
\begin{tabular}{p{6.5cm}c@{\hspace{3mm}}c}
\toprule
\textbf{Refinement Type} & \textbf{Proportion} & \textbf{\emph{Great} vs \emph{Good}} \\
\midrule
Mandating explicit statements, justifications, or declarations & 16.7\% & 52.6\% vs 47.4\% \\
Shifting focus from superficial to substantive qualities & 11.7\% & 48.2\% vs 51.8\% \\
Adjusting scoring weights, granularity, or mechanisms & 8.9\% & 48.5\% vs 51.5\% \\
\rowcolor{blue!8} Breaking down complex criteria into sub-components & 7.2\% & \textbf{\textcolor{blue}{55.9\%}} vs 44.1\% \\
\rowcolor{red!8} Introducing penalties, prohibitions, or negative scoring & 6.6\% & 43.5\% vs \textbf{\textcolor{red}{56.5\%}} \\
Replacing vague language with specific requirements & 6.2\% & 54.7\% vs 45.3\% \\
Adding requirements for comparing alternatives & 5.9\% & 49.0\% vs 51.0\% \\
\rowcolor{red!8} Broadening criteria to accept multiple approaches & 5.6\% & 44.4\% vs \textbf{\textcolor{red}{55.6\%}} \\
Adding conditional or context-dependent rules & 4.5\% & 51.4\% vs 48.6\% \\
\rowcolor{red!8} Streamlining by removing redundancy & 4.4\% & 41.6\% vs \textbf{\textcolor{red}{58.4\%}} \\
Adding timing, sequencing, or process flow criteria & 3.5\% & 54.1\% vs 45.9\% \\
Mandating precise language or technical accuracy & 3.3\% & 50.2\% vs 49.8\% \\
Requiring causal explanations or mechanistic understanding & 3.3\% & 51.8\% vs 48.2\% \\
\rowcolor{blue!8} Enhancing verification, validation, and evidence standards & 2.3\% & \textbf{\textcolor{blue}{55.0\%}} vs 45.0\% \\
Mandating specific structure or formatting & 2.1\% & 53.8\% vs 46.2\% \\
Requiring explicit justification for decisions & 1.9\% & 50.3\% vs 49.7\% \\
\rowcolor{blue!8} Defining explicit scope, boundaries, or constraints & 1.8\% & \textbf{\textcolor{blue}{58.9\%}} vs 41.1\% \\
\rowcolor{blue!8} Incorporating risk analysis or safety constraints & 1.8\% & \textbf{\textcolor{blue}{55.2\%}} vs 44.8\% \\
\rowcolor{blue!8} Requiring specific, actionable recommendations & 1.0\% & \textbf{\textcolor{blue}{55.5\%}} vs 44.5\% \\
\rowcolor{red!8} Correcting errors or aligning with intended standards & 0.9\% & 32.8\% vs \textbf{\textcolor{red}{67.2\%}} \\
\rowcolor{red!8} Assessing communication quality or tone & 0.5\% & 43.8\% vs \textbf{\textcolor{red}{56.2\%}} \\
\bottomrule
\end{tabular}
\label{tab:refinement_patterns}
\end{table}

Table~\ref{tab:refinement_patterns} shows the distribution of refinement types on the health domain. 
Both qualities contribute, but the patterns diverge: 
\Good responses often drive \textbf{basic corrections}, such as adding penalties for obvious mistakes or broadening overly restrictive criteria;
by contrast, \Great responses more often drive \textbf{sophisticated refinements}, such as breaking down complex criteria into sub-components or enhancing verification standards.

In the example from \Cref{tab:rubric-example0}, for a medical prompt about a patient with serious symptoms, two initially tied \Great responses are distinguished by adding the criterion: ``The response mentions that urgent imaging (e.g., contrast-enhanced CT or MRI/MRV) is required to confirm the diagnosis.'' This refinement, from the ``Enhancing verification, validation, and evidence standards'' cluster, mandates a critical, verifiable clinical action, and only one of the responses satisfies. Such qualitative results confirm our finding that comparing \Great responses provides the nuanced distinctions needed to identify \Excellent outputs, thereby sharpening accuracy in the high-reward tail.

\section{Related work}

\paragraph{Reward over-optimization.}
\citet{gao2023scaling} highlighted the issue of reward over-optimization for both best-of-n sampling and reinforcement learning when using preference-based reward models. Although this phenomenon has since been repeatedly observed in empirical studies~\citep{bai2022training, moskovitz2023confronting, perez2023discovering, gui2024bonbon, wang2024transforming}, its theoretical underpinnings remain limited. Existing analyses typically relate the performance degradation caused by a proxy reward to global statistics describing how far the proxy deviates from the true reward~\citep{huang2025best,mroueh2024information}. In contrast, our work provides a sharper perspective: what truly governs performance is the fidelity of the proxy reward in the high-value region, where high-quality responses concentrate.

\paragraph{Rubrics reward.} RL from rubrics reward (RLRR) has proven to be an effective method in specialized domains like science and health~\citep{gunjal2025rubrics}, general instruction-following~\citep{huang2025reinforcement, viswanathan2025checklists}, and for enhancing agentic ability~\citep{team2025kimi}, with implementations using both online and offline RL. The idea of rubrics is also utilized in generative reward models (GRMs), wherein a reward model is prompted to first generate rubrics and then use them to evaluate a response~\citep{liu2025inference,chen2025rm}. This approach enables inference time scaling of reward modeling and improves explainability. However, generating rubrics on the fly is computationally inefficient and unsuitable for large-scale training.

\section{Discussion}

In this paper, we investigate rubric-based reward modeling for LLM post-training. We begin by analyzing the central weakness of reinforcement fine-tuning, \emph{reward over-optimization}, and theoretically trace it to misspecification of the proxy reward in the high-reward tail. A comprehensive empirical study highlights rubric-based rewards as an effective remedy. We further demonstrate that carefully designed rubrics, which distinguish among \Great, \emph{diverse} off-policy responses, lead to consistently strong fine-tuning performance.

\paragraph{Off-policy responses for Bradley-Terry reward model training might generalize, but is sample inefficient.}
While we find a medium amount off-policy responses ($n=5000$, in addition to the same number of on-policy responses) do not help Bradley-Terry reward model guide the current policy (see \Cref{app:rlhf}), we note that other work successfully train BT reward model with off-policy samples, but with a much larger scale---using up to 20 million high quality samples (\citep{liu2025skywork, cui2023ultrafeedback}). 
Indeed, Bradley-Terry reward model's generalizability scales with the number, and diversity of training samples. 
However, it's not always easy to find large-scale data for many specialized domains, such as healthcare. In contrast, rubric-based reward can easily encode generalizable principles from limited amount of data.

\paragraph{Weighted average of rubric score is not optimal.} To specifically analyze the impact of rubric quality, we deliberately use the most simple method of score aggregation, by taking a weighted average of scores from the satisfied criteria. Prior work has explored diverse approaches, including implicit aggregation by a verifier model~\citep{gunjal2025rubrics}, sophisticated frameworks to capture non-linear dependencies~\citep{huang2025reinforcement}, weighted averages of continuous scores~\citep{viswanathan2025checklists}, and model-based self-critique that weighs criteria against internal priors~\citep{team2025kimi}. We acknowledge that aggregation is a central component of an optimal rubric reward system and leave it for future work.

\newpage

\bibliography{bibs/llm}

@inproceedings{NEURIPS2022_b1efde53,
 author = {Ouyang, Long and Wu, Jeffrey and Jiang, Xu and Almeida, Diogo and Wainwright, Carroll and Mishkin, Pamela and Zhang, Chong and Agarwal, Sandhini and Slama, Katarina and Ray, Alex and Schulman, John and Hilton, Jacob and Kelton, Fraser and Miller, Luke and Simens, Maddie and Askell, Amanda and Welinder, Peter and Christiano, Paul F and Leike, Jan and Lowe, Ryan},
 booktitle = {Advances in Neural Information Processing Systems},
 editor = {S. Koyejo and S. Mohamed and A. Agarwal and D. Belgrave and K. Cho and A. Oh},
 pages = {27730--27744},
 publisher = {Curran Associates, Inc.},
 title = {Training language models to follow instructions with human feedback},
 url = {https://proceedings.neurips.cc/paper_files/paper/2022/file/b1efde53be364a73914f58805a001731-Paper-Conference.pdf},
 volume = {35},
 year = {2022}
}

@article{bai2022training,
  title={Training a helpful and harmless assistant with reinforcement learning from human feedback},
  author={Bai, Yuntao and Jones, Andy and Ndousse, Kamal and Askell, Amanda and Chen, Anna and DasSarma, Nova and Drain, Dawn and Fort, Stanislav and Ganguli, Deep and Henighan, Tom and others},
  journal={arXiv preprint arXiv:2204.05862},
  year={2022}
}

@inproceedings{gao2023scaling,
  title={Scaling laws for reward model overoptimization},
  author={Gao, Leo and Schulman, John and Hilton, Jacob},
  booktitle={International Conference on Machine Learning},
  pages={10835--10866},
  year={2023},
  organization={PMLR}
}

@inproceedings{
rafailov2023direct,
title={Direct Preference Optimization: Your Language Model is Secretly a Reward Model},
author={Rafael Rafailov and Archit Sharma and Eric Mitchell and Christopher D Manning and Stefano Ermon and Chelsea Finn},
booktitle={Thirty-seventh Conference on Neural Information Processing Systems},
year={2023},
url={https://openreview.net/forum?id=HPuSIXJaa9}
}

@inproceedings{azar2024general,
  title={A general theoretical paradigm to understand learning from human preferences},
  author={Azar, Mohammad Gheshlaghi and Guo, Zhaohan Daniel and Piot, Bilal and Munos, Remi and Rowland, Mark and Valko, Michal and Calandriello, Daniele},
  booktitle={International Conference on Artificial Intelligence and Statistics},
  pages={4447--4455},
  year={2024},
  organization={PMLR}
}

@misc{wang2024transforming,
      title={Transforming and Combining Rewards for Aligning Large Language Models}, 
      author={Zihao Wang and Chirag Nagpal and Jonathan Berant and Jacob Eisenstein and Alex D'Amour and Sanmi Koyejo and Victor Veitch},
      year={2024},
      eprint={2402.00742},
      archivePrefix={arXiv},
      primaryClass={cs.CL}
}

@article{gui2024bonbon,
  title={BoNBoN Alignment for Large Language Models and the Sweetness of Best-of-n Sampling},
  author={Gui, Lin and G{\^a}rbacea, Cristina and Veitch, Victor},
  journal={arXiv preprint arXiv:2406.00832},
  year={2024}
}

@article{huang2025best,
  title={Is Best-of-N the Best of Them? Coverage, Scaling, and Optimality in Inference-Time Alignment},
  author={Huang, Audrey and Block, Adam and Liu, Qinghua and Jiang, Nan and Foster, Dylan J and Krishnamurthy, Akshay},
  journal={arXiv preprint arXiv:2503.21878},
  year={2025}
}

@article{yang2025qwen3,
  title={Qwen3 technical report},
  author={Yang, An and Li, Anfeng and Yang, Baosong and Zhang, Beichen and Hui, Binyuan and Zheng, Bo and Yu, Bowen and Gao, Chang and Huang, Chengen and Lv, Chenxu and others},
  journal={arXiv preprint arXiv:2505.09388},
  year={2025}
}

@article{arora2025healthbench,
  title={Healthbench: Evaluating large language models towards improved human health},
  author={Arora, Rahul K and Wei, Jason and Hicks, Rebecca Soskin and Bowman, Preston and Qui{\~n}onero-Candela, Joaquin and Tsimpourlas, Foivos and Sharman, Michael and Shah, Meghan and Vallone, Andrea and Beutel, Alex and others},
  journal={arXiv preprint arXiv:2505.08775},
  year={2025}
}

@misc{gunjal2025rubrics,
      title={Rubrics as Rewards: Reinforcement Learning Beyond Verifiable Domains}, 
      author={Anisha Gunjal and Anthony Wang and Elaine Lau and Vaskar Nath and Yunzhong He and Bing Liu and Sean Hendryx},
      year={2025},
      eprint={2507.17746},
      archivePrefix={arXiv},
      primaryClass={cs.LG},
      url={https://arxiv.org/abs/2507.17746}, 
}

@article{viswanathan2025checklists,
  title={Checklists are better than reward models for aligning language models},
  author={Viswanathan, Vijay and Sun, Yanchao and Ma, Shuang and Kong, Xiang and Cao, Meng and Neubig, Graham and Wu, Tongshuang},
  journal={arXiv preprint arXiv:2507.18624},
  year={2025}
}

@article{huang2025reinforcement,
  title={Reinforcement Learning with Rubric Anchors},
  author={Huang, Zenan and Zhuang, Yihong and Lu, Guoshan and Qin, Zeyu and Xu, Haokai and Zhao, Tianyu and Peng, Ru and Hu, Jiaqi and Shen, Zhanming and Hu, Xiaomeng and others},
  journal={arXiv preprint arXiv:2508.12790},
  year={2025}
}

@inproceedings{chiang2024chatbot,
  title={Chatbot arena: An open platform for evaluating llms by human preference},
  author={Chiang, Wei-Lin and Zheng, Lianmin and Sheng, Ying and Angelopoulos, Anastasios Nikolas and Li, Tianle and Li, Dacheng and Zhu, Banghua and Zhang, Hao and Jordan, Michael and Gonzalez, Joseph E and others},
  booktitle={Forty-first International Conference on Machine Learning},
  year={2024}
}

@article{comanici2025gemini,
  title={Gemini 2.5: Pushing the frontier with advanced reasoning, multimodality, long context, and next generation agentic capabilities},
  author={Comanici, Gheorghe and Bieber, Eric and Schaekermann, Mike and Pasupat, Ice and Sachdeva, Noveen and Dhillon, Inderjit and Blistein, Marcel and Ram, Ori and Zhang, Dan and Rosen, Evan and others},
  journal={arXiv preprint arXiv:2507.06261},
  year={2025}
}

@article{shao2024deepseekmath,
  title={Deepseekmath: Pushing the limits of mathematical reasoning in open language models},
  author={Shao, Zhihong and Wang, Peiyi and Zhu, Qihao and Xu, Runxin and Song, Junxiao and Bi, Xiao and Zhang, Haowei and Zhang, Mingchuan and Li, YK and Wu, Yang and others},
  journal={arXiv preprint arXiv:2402.03300},
  year={2024}
}

@misc{chen2024huatuogpto1medicalcomplexreasoning,
      title={HuatuoGPT-o1, Towards Medical Complex Reasoning with LLMs}, 
      author={Junying Chen and Zhenyang Cai and Ke Ji and Xidong Wang and Wanlong Liu and Rongsheng Wang and Jianye Hou and Benyou Wang},
      year={2024},
      eprint={2412.18925},
      archivePrefix={arXiv},
      primaryClass={cs.CL},
      url={https://arxiv.org/abs/2412.18925}, 
}

@article{balashankar2024infalign,
  title={InfAlign: Inference-aware language model alignment},
  author={Balashankar, Ananth and Sun, Ziteng and Berant, Jonathan and Eisenstein, Jacob and Collins, Michael and Hutter, Adrian and Lee, Jong and Nagpal, Chirag and Prost, Flavien and Sinha, Aradhana and others},
  journal={arXiv preprint arXiv:2412.19792},
  year={2024}
}

@article{liu2025inference,
  title={Inference-time scaling for generalist reward modeling},
  author={Liu, Zijun and Wang, Peiyi and Xu, Runxin and Ma, Shirong and Ruan, Chong and Li, Peng and Liu, Yang and Wu, Yu},
  journal={arXiv preprint arXiv:2504.02495},
  year={2025}
}

@article{chen2025rm,
  title={Rm-r1: Reward modeling as reasoning},
  author={Chen, Xiusi and Li, Gaotang and Wang, Ziqi and Jin, Bowen and Qian, Cheng and Wang, Yu and Wang, Hongru and Zhang, Yu and Zhang, Denghui and Zhang, Tong and others},
  journal={arXiv preprint arXiv:2505.02387},
  year={2025}
}

@article{team2025kimi,
  title={Kimi k2: Open agentic intelligence},
  author={Team, Kimi and Bai, Yifan and Bao, Yiping and Chen, Guanduo and Chen, Jiahao and Chen, Ningxin and Chen, Ruijue and Chen, Yanru and Chen, Yuankun and Chen, Yutian and others},
  journal={arXiv preprint arXiv:2507.20534},
  year={2025}
}

@article{mroueh2024information,
  title={Information theoretic guarantees for policy alignment in large language models},
  author={Mroueh, Youssef},
  journal={arXiv preprint arXiv:2406.05883},
  year={2024}
}

@article{liu2025skywork,
  title={Skywork-Reward-V2: Scaling Preference Data Curation via Human-AI Synergy},
  author={Liu, Chris Yuhao and Zeng, Liang and Xiao, Yuzhen and He, Jujie and Liu, Jiacai and Wang, Chaojie and Yan, Rui and Shen, Wei and Zhang, Fuxiang and Xu, Jiacheng and others},
  journal={arXiv preprint arXiv:2507.01352},
  year={2025}
}

@article{cui2023ultrafeedback,
  title={Ultrafeedback: Boosting language models with scaled ai feedback},
  author={Cui, Ganqu and Yuan, Lifan and Ding, Ning and Yao, Guanming and He, Bingxiang and Zhu, Wei and Ni, Yuan and Xie, Guotong and Xie, Ruobing and Lin, Yankai and others},
  journal={arXiv preprint arXiv:2310.01377},
  year={2023}
}

@article{moskovitz2023confronting,
  title={Confronting reward model overoptimization with constrained rlhf},
  author={Moskovitz, Ted and Singh, Aaditya K and Strouse, DJ and Sandholm, Tuomas and Salakhutdinov, Ruslan and Dragan, Anca D and McAleer, Stephen},
  journal={arXiv preprint arXiv:2310.04373},
  year={2023}
}

@inproceedings{perez2023discovering,
  title={Discovering language model behaviors with model-written evaluations},
  author={Perez, Ethan and Ringer, Sam and Lukosiute, Kamile and Nguyen, Karina and Chen, Edwin and Heiner, Scott and Pettit, Craig and Olsson, Catherine and Kundu, Sandipan and Kadavath, Saurav and others},
  booktitle={Findings of the association for computational linguistics: ACL 2023},
  pages={13387--13434},
  year={2023}
}

@article{akyurek2025prbench,
  title={PRBench: Large-Scale Expert Rubrics for Evaluating High-Stakes Professional Reasoning},
  author={Aky{\"u}rek, Afra Feyza and Gosai, Advait and Zhang, Chen Bo Calvin and Gupta, Vipul and Jeong, Jaehwan and Gunjal, Anisha and Rabbani, Tahseen and Mazzone, Maria and Randolph, David and Meymand, Mohammad Mahmoudi and others},
  journal={arXiv preprint arXiv:2511.11562},
  year={2025}
}

@misc{LMArenaTeam2025arena,
    author = {{LMArena Team}},
    title = {A Deep Dive into Recent Arena Data},
    howpublished = {\url{https://news.lmarena.ai/opendata-july2025/}},
    year = {2025},
    note = {Dataset: \url{https://huggingface.co/datasets/lmarena-ai/arena-human-preference-140k}}
}
\bibliographystyle{iclr2026_conference}

\newpage

\section*{Usage of Large Language Models}
In this work, besides running LLMs in experiments, we use LLMs for the following purposes:
\begin{enumerate}
    \item Aid or Polish Writing (Gemini 2.5 Pro, ChatGPT 4/5)
    \item Literature Retrieval and Discovery (e.g., finding related work) (Gemini 2.5 Pro Deep Research, ChatGPT Deep Research)
    \item Assisting Code Writing and Debugging (Claude-Ops-4.1, GPT-5)
\end{enumerate}
We fully understand the responsibility of using LLMs in academic research. We carefully monitor any potential problems, such as plagiarism or scientific misconduct (e.g., fabrication of facts) when using LLMs. We make sure these problems do not occur in the paper. 

\appendix
\section{Theoretical Results}
\label{subsec:proof-thm-1}
\RealWinRate*

\begin{proof}
First, we compute the KL divergence.
When $f(R_0^x)\sim U(0,1)$, by \Cref{prop:reward-and-kl-of-rlhf-sol}, the KL divergence is
\begin{align*}
&\mathbb{D}_{\mathrm{KL}}\left[\pi_r(y\mid x)\|\pi_0(y\mid x)\right]
=\mathbb{E}_{x\sim D}\left[\frac{\mathbb{E}\left[f(R^x_0)~e^{f(R^x_0)/\beta}/{\beta}\right]}{\mathbb{E}\left[ e^{f(R^x_0)/\beta}\right]}-\log\mathbb{E}\left[ e^{f(R^x_0)/\beta}\right]\right]\\
&=\mathbb{E}_{x\sim D}\left[\frac{\int_0^1 u e^{u/\beta} \mathrm{d}u}{\beta \int_0^1 e^{u/\beta}\mathrm{d}u}-\log\left(\int_0^1 e^{u/\beta}\mathrm{d}u\right)\right]
=\frac{(1/\beta-1)e^{1/\beta}+1}{e^{1/\beta}-1}-\log\left[\beta(e^{1/\beta}-1)\right].
\end{align*}
Then, we compute the expected reward: denote $T_0^x=f(R_0^x)$,
\begin{align*}
&\mathbb{E}_{x\sim D,~y\sim\pi_r(\cdot\mid x)}\left[r^\star(x,y)\right]
=\mathbb{E}_{x\sim D}\left[\frac{\mathbb{E}\left[R^x_0~e^{f(R^x_0)/\beta}\right]}{\mathbb{E}\left[ e^{f(R^x_0)/\beta}\right]}\right]\\
&=\mathbb{E}_{x\sim D}\left[\frac{\mathbb{E}\left[f^{-1}(T_0^x)~e^{T^x_0/\beta}\right]}{\mathbb{E}\left[ e^{T_0^x/\beta}\right]}\right]
=\mathbb{E}_{x\sim D}\left[\frac{\int_0^1 f^{-1}(u) e^{u/\beta}\mathrm{d}u}{\int_0^1 e^{u/\beta} \mathrm{d}u}\right]\\
&=\frac{\int_0^1 f^{-1}(u) e^{u/\beta}\mathrm{d}u}{\int_0^1 e^{u/\beta} \mathrm{d}u}
=\frac{\int_0^1 f^{-1}(u) e^{u/\beta}\mathrm{d}u}{\beta(e^{u/\beta}-1)}
\end{align*}

Since $F_0^x(R_0^x){=}R_0^x$ when $R_0^x\sim U(0,1)$, the win rate is the expected reward. Then the theorem follows.

\end{proof}

\newpage
\section{Prompts Used for Experiments}
\label{app:prompts}
\begin{tcolorbox}[
    breakable,
    colframe=black,      %
    colback=white,       %
    coltitle=white,      %
    colbacktitle=black,  %
    fonttitle=\bfseries, %
    title=Prompt for Constructing Initial Rubrics , %
    boxrule=1pt          %
]

You're a skilled judge evaluating the quality of LLM responses to a user prompt. Your first task is to create a comprehensive rubric for grading these responses across multiple dimensions.

\vspace{0.5cm}
Given a user prompt, generate a list of binary (yes/no) criteria. These criteria should assess how well the LLM answered the prompt. Only write rubrics you are confident about.

\vspace{0.5cm}
Here are tips for writing good rubrics:

\vspace{0.5cm}
i. MECE: \\
\hspace*{1.5em}- Mutually Exclusive, Collectively Exhaustive

\vspace{0.5cm}
ii. Completeness: \\
\hspace*{1.5em}- Consider all the elements you would want to include to create a perfect response and put them into the rubric. This means including not only the facts and statements directly requested by the prompt, but also the supporting details that provide justification, reasoning, and logic for your response. Each of these elements should have a criterion because each criterion helps to develop the answer to the question from a slightly different angle.

\vspace{0.5cm}
iii. No overlapping: \\
\hspace*{1.5em}- the same error from a model shouldn’t be punished multiple times.

\vspace{0.5cm}
iv. Diversity: \\
\hspace*{1.5em}- The rubric items should include variable types of information. \\
\hspace*{1.5em}- If all criteria are like “the response mentions A”, “the response mentions B”, then this is not a good rubric.

\vspace{0.5cm}
v: How many rubric items for each prompt \\
\hspace*{1.5em}- There is no golden standard, and the desired number of rubrics varies by accounts and task types. \\
\hspace*{1.5em}- Write rubrics that cover all aspects of an ideal response.

\vspace{0.5cm}
vi: How many rubric items to fail \\
\hspace*{1.5em}- A good rule of thumb is that the model fails on 50\% of rubrics items

\vspace{0.5cm}
vii: Atomicity / Non-stacked \\
\hspace*{1.5em}- Each rubric criterion should evaluate exactly one distinct aspect. Avoid bundling multiple criteria into a single rubric. Most stacked criteria with the word “and” can be broken up into multiple pieces. \\
\hspace*{1.5em}\faTimes\ Response identifies George Washington as the first U.S. president and mentions he served two terms. \\
\hspace*{1.5em}\faCheck\ Response identifies George Washington as the first U.S. president. \\
\hspace*{1.5em}\faCheck\ Response mentions that George Washington served two terms.

\vspace{0.5cm}
viii: Specificity \\
\hspace*{1.5em}- Criteria should be binary (true or false) and objective. \\
\hspace*{1.5em}- Avoid vague descriptions (e.g., "the response must be accurate" is vague). \\
\hspace*{1.5em}- Example: "The response should list exactly three examples."

\vspace{0.5cm}
ix: Self-contained \\
\hspace*{1.5em}- Each criterion should contain all the information needed to evaluate a response, e.g. \\
\hspace*{1.5em}\faTimes\ Mentions the capital city of Canada. \\
\hspace*{1.5em}\faCheck\ Mentions the capital city of Canada is Ottawa.

\vspace{0.5cm}
x: Criterion should be verifiable without requiring external search. \\
\hspace*{1.5em}\faTimes\ Response names any of the Nobel Prize winners in Physics in 2023 \\
\hspace*{1.5em}\faCheck\ Response names any of the following Nobel Prize winners in Physics in 2023: Pierre Agostini, Ferenc Krausz, or Anne L’Huillier.

\vspace{0.5cm}
xi. The binary criteria should be phrased so that yes means the model response is good and no means the model response is bad.

\vspace{0.5cm}
Finally, we want to assign different weight for each question. Give a weight on a scale of 1 (least important) to 3 (most important) for each question based on \\
1. the question's alignment with user demand (3 if user would be frustrated if the answer is no; 1 if user would not be bothered at all if the answer is no) \\
2. the question's importance in terms of determining quality/correctness (3 if the response would be completely incorrect if the answer is no; 1 if an extreme edge case would be missed and the overall quality won't be affected if the answer is no)

\vspace{0.5cm}
Here is the user prompt for which we want to generate a rubric:

\vspace{0.5cm}
\textbf{PROMPT:} \\
\{prompt\}

\vspace{0.5cm}
Return ONLY the JSON array of the rubrics, no other text. For example:
\begin{tcblisting}{
    listing only, %
    listing options={
        breaklines=true, %
        basicstyle=\ttfamily,
   }
}
[
  {{"criterion": "Does the response provide a list of songs?", "weight": 3}},
  {{"criterion": "The response explicitly state it is listing French romantic songs.", "weight": 2}}
]
\end{tcblisting}

\vspace{0.5cm}
Note: Local IDs will be automatically assigned to each criterion (c1, c2, c3, etc.), so don't include IDs into outputed criterion.

\end{tcolorbox}

\begin{tcolorbox}[
    breakable,
    colframe=black,
    colback=white,
    coltitle=white,
    colbacktitle=black,
    fonttitle=\bfseries,
    title=Prompt for Improving Rubrics,
    boxrule=1pt
]

You're a skilled judge assessing the quality of LLM responses to a user prompt. The current rubric isn't good enough to effectively differentiate between high-quality responses.

\vspace{0.5cm}
Your goal is to improve the current rurbics to address this (adding new creteria, rewriting, decomposing, and deleting the current creteria). The updated rubric must be comprehensive and consistently applicable for grading LLM responses. These criteria should specifically assess how well the LLM answered the given prompt. Only write rubrics you are confident about.

\vspace{0.5cm}
Here are tips for writing good rubrics:
\vspace{0.5cm}

i. MECE: \\
\hspace*{1.5em}- Mutually Exclusive, Collectively Exhaustive

\vspace{0.5cm}
ii. Completeness: \\
\hspace*{1.5em}- Consider all the elements you would want to include to create a perfect response and put them into the rubric. This means including not only the facts and statements directly requested by the prompt, but also the supporting details that provide justification, reasoning, and logic for your response. Each of these elements should have a criterion because each criterion helps to develop the answer to the question from a slightly different angle.

\vspace{0.5cm}
iii. No overlapping: \\
\hspace*{1.5em}- the same error from a model shouldn’t be punished multiple times.

\vspace{0.5cm}
iv. Diversity: \\
\hspace*{1.5em}- The rubric items should include variable types of information. \\
\hspace*{1.5em}- If all criteria are like “the response mentions A”, “the response mentions B”, then this is not a good rubric.

\vspace{0.5cm}
v: How many rubric items for each prompt \\
\hspace*{1.5em}- There is no golden standard, and the desired number of rubrics varies by accounts and task types. \\
\hspace*{1.5em}- Write rubrics that cover all aspects of an ideal response.

\vspace{0.5cm}
vi: How many rubric items to fail \\
\hspace*{1.5em}- A good rule of thumb is that the model fails on 50\% of rubrics items

\vspace{0.5cm}
vii: Atomicity / Non-stacked \\
\hspace*{1.5em}- Each rubric criterion should evaluate exactly one distinct aspect. Avoid bundling multiple criteria into a single rubric. Most stacked criteria with the word “and” can be broken up into multiple pieces. \\
\hspace*{1.5em}\faTimes\ Response identifies George Washington as the first U.S. president and mentions he served two terms. \\
\hspace*{1.5em}\faCheck\ Response identifies George Washington as the first U.S. president. \\
\hspace*{1.5em}\faCheck\ Response mentions that George Washington served two terms.

\vspace{0.5cm}
viii: Specificity \\
\hspace*{1.5em}- Criteria should be binary (true or false) and objective. \\
\hspace*{1.5em}- Avoid vague descriptions (e.g., "the response must be accurate" is vague). \\
\hspace*{1.5em}- Example: "The response should list exactly three examples."

\vspace{0.5cm}
ix: Self-contained \\
\hspace*{1.5em}- Each criterion should contain all the information needed to evaluate a response, e.g. \\
\hspace*{1.5em}\faTimes\ Mentions the capital city of Canada. \\
\hspace*{1.5em}\faCheck\ Mentions the capital city of Canada is Ottawa.

\vspace{0.5cm}
x: Criterion should be verifiable without requiring external search. \\
\hspace*{1.5em}\faTimes\ Response names any of the Nobel Prize winners in Physics in 2023 \\
\hspace*{1.5em}\faCheck\ Response names any of the following Nobel Prize winners in Physics in 2023: Pierre Agostini, Ferenc Krausz, or Anne L’Huillier.

\vspace{0.5cm}
xi. The binary criteria should be phrased so that yes means the model response is good and no means the model response is bad.

\vspace{0.5cm}
Finally, we want to assign different weight for each criterion. Give a weight on a scale of 1 (least important) to 3 (most important) for each question based on \\
1. the question's alignment with user demand (3 if user would be frustrated if the answer is no; 1 if user would not be bothered at all if the answer is no) \\
2. the question's importance in terms of determining quality/correctness (3 if the response would be completely incorrect if the answer is no; 1 if an extreme edge case would be missed and the overall quality won't be affected if the answer is no)

\vspace{0.5cm}
Here is the user prompt for which we want to improve the rubric:

\vspace{0.5cm}
\textbf{PROMPT:} \\
\{prompt\}

\vspace{0.5cm}
The existing rubrics we are using is: \\
\{rubrics\}

\vspace{0.5cm}
The two reference responses are:

\vspace{0.5cm}
\textbf{Reponse 1:} \\
\{response1\}

\vspace{0.5cm}
\textbf{Reponse 2:} \\
\{response2\}

\vspace{0.5cm}
Return ONLY the JSON array of the full rubrics, no other text. For example:
\begin{tcblisting}{
    listing only,
    listing options={
        breaklines=true,
        basicstyle=\ttfamily,
   }
}
[
  {{"criterion": "Does the response provide specific release years for each song?", "weight": 2}},
  {{"criterion": "The response includes artist names for each song mentioned", "weight": 1}}
]
\end{tcblisting}
\vspace{0.5cm}
Note: Local IDs will be automatically assigned to each criterion, so don't include IDs in your output.

\end{tcolorbox}

\begin{tcolorbox}[
    breakable,
    colframe=black,
    colback=white,
    coltitle=white,
    colbacktitle=black,
    fonttitle=\bfseries,
    title=Prompt for Scoring Responses,
    boxrule=1pt
]

You are a skilled judge who will be assessing the quality of LLM responses to a user prompt.

\vspace{0.5cm}
Given a user prompt, LLM response, and a rubric, your task is evalauting the performance of the model response by seeing whether or not it meets the rubric dimension.

\vspace{0.5cm}
Answer the each of the given rubric dimension in either "yes" or "no". Do not output any response other than "yes" or "no".

\vspace{0.5cm}
Keep in mind that you will be grading industry-leading LLMs. Make sure to have high expectation for grading the responses.

\vspace{0.5cm}
Make sure your evaluation is as objective and consistent as it could be. By consistent we mean that a different evaluator's assessment of the task should agree with yours.

\vspace{0.5cm}
Think carefully before you make the decision. After you make the decision, explicitly output which dimension receives "yes" and which dimension receives "no".

\vspace{0.5cm}
\textbf{Input:}

\vspace{0.5cm}
\textbf{* PROMPT:} \{prompt\}

\vspace{0.5cm}
\textbf{* RESPONSE:} \{response\}

\vspace{0.5cm}
\textbf{* RUBRIC:} \{rubric\}

\vspace{0.5cm}
Return ONLY the JSON array, no other text. For example:

\begin{tcblisting}{
    listing only,
    listing options={
        breaklines=true,
        basicstyle=\ttfamily,
   }
}
{{"c1":"yes", "c2":"no", "c3":"yes"}}
\end{tcblisting}

\end{tcolorbox}

\newpage
\section{Hyperparameter}\label{app:hyperparameter}
The GRPO hyperparameters used for the RLRR results in \Cref{tab:main_results_final} are listed in \Cref{tab:hyperparameters}. The finance domain is an exception due to the limited amount of available data, and its hyperparameters are presented separately in \Cref{tab:hyperparameters_finance}. \Cref{fig:training_dynamics_comparison} uses a longer training run than in \cref{tab:main_results_final}, resulting in some performance variations due to different warmup steps.
\begin{table}[h!]
\centering
\caption{GRPO Hyperparameter Configuration}
\label{tab:hyperparameters}
\begin{tabular}{ll}
\toprule
\textbf{Hyperparameter} & \textbf{Value} \\
\midrule
Rollouts per Prompt & 16 \\
Gradient Accumulation Steps & 2 \\
Per-Device Train Batch Size & 6 \\
Warmup Ratio & 0.1 \\
KL Coefficient & 0.01 \\
Learning Rate & $1.0 \times 10^{-5}$ \\
Learning Rate Scheduler & Constant with Warmup \\
Maximum Sequence Length & 3584 \\
Training Epochs & 2 \\
\bottomrule
\end{tabular}
\end{table}

\begin{table}[h!]
\centering
\caption{GRPO Hyperparameter Configuration in Finance Domain}
\label{tab:hyperparameters_finance}
\begin{tabular}{ll}
\toprule
\textbf{Hyperparameter} & \textbf{Value} \\
\midrule
Rollouts per Prompt & 16 \\
Gradient Accumulation Steps & 2 \\
Per-Device Train Batch Size & 6 \\
Warmup Ratio & 0.1 \\
KL Coefficient & 0.01 \\
Learning Rate & $1.0 \times 10^{-5}$ \\
Learning Rate Scheduler & Constant with Warmup \\
Maximum Sequence Length & 3584 \\
Training Steps & 320 \\
\bottomrule
\end{tabular}
\end{table}

\section{Empirical Results on RLHF}\label{app:rlhf}
We finetune a Bradley-Terry Reward model on various responses, with preference generated by GPT-4.1, the same model as the judge model for evaluation. For each of the prompt in the training set, we generated a pair of responses at temperature 1.0 using the base policy model Qwen3-8B-Base (on-policy) or Gemini-2.5-Pro (off-policy). Preferences were labeled using GPT-4.1, the same model used for final evaluation. This preference data was then used to train a reward model based on Llama-3.1-8B-Instruct, with hyperparameters specified in \Cref{tab:rm-hyperparameters}. Finally, this reward model was used for GRPO training, following the configuration in \Cref{tab:hyperparameters}. 

We find that using on-policy responses is a baseline that can't be easily improved upon: 
\begin{enumerate}
    \item Training on off-policy, \Great responses  deteriorates the performance
    \item Adding both off-policy and on-policy responses only helps with win rates but not helps with healthbench. This suggests that the off-policy samples only help the reward model encode superficial features (that can game LLM-judge) instead of true capabilities as measured by more objective metrics. 
\end{enumerate}

This experiment shows the difficulty of improving Bradley-Terry models with off-policy responses.

\begin{table}[h!]
\centering
\caption{Win-rates and HealthBench scores for the Health domain.}
\begin{tabular}{lcc}
\toprule
\textbf{Method} & \textbf{Win-Rate} & \textbf{HealthBench} \\
\midrule
Reward Model (on-policy) & 26.8\% & 0.3036\\
Reward Model (off-policy, \Great) & 22.4\% & 0.2798\\
Reward Model (on-policy + off-policy) & 30.7\% & 0.3032\\
SFT & 25.8\% & 0.3094 \\
\midrule
Initial, Prompt only & 21.7\% & 0.3004 \\
1 \Good Pair & 22.4\% & 0.2912 \\
1 \Great Pair & \textbf{26.5\%} & \textbf{0.3163} \\
\midrule
4 \Great Pairs & 31.4\% & 0.3348 \\
4 \Great \& Diverse Pairs & \textbf{34.4\%} & \textbf{0.3513} \\
\bottomrule
\end{tabular}
\label{tab:health_results}
\end{table}

\begin{table}[h!]
\centering
\caption{Reward Model Hyperparameter Configuration}
\label{tab:rm-hyperparameters}
\begin{tabular}{ll}
\toprule
\textbf{Hyperparameter} & \textbf{Value} \\
\midrule
Learning Rate & $1.0 \times 10^{-5}$ \\
Per-Device Train Batch Size & 4 \\
Gradient Accumulation Steps & 4 \\
Training Epochs & 10 \\
Maximum Sequence Length & 8192 \\
Warmup Ratio & 0.1 \\
Learning Rate Scheduler & Cosine \\
\bottomrule
\end{tabular}
\end{table}

\section{LLM Judge for evaluation}\label{app:llm-as-judge}
We use the same judge model as the rubrics proposer (GPT-4.1). This is by design: our primary goal is to test how best to incorporate additional responses into the rubric construction process. By using the same powerful model for both proposing rubrics and evaluating final outputs, we isolate the quality of the candidate responses as the key experimental variable and eliminate potential confounding issues that could arise from disagreements between a proposer and a judge. 

We use a minimal judge prompt to compare two responses:
\begin{tcolorbox}[
    colframe=black,         %
    colback=white,          %
    coltitle=white,         %
    colbacktitle=black,     %
    fonttitle=\bfseries,    %
    title=LLM Judge Prompt,          %
    boxrule=1pt             %
]
    You are a skilled judge who will be assessing the quality of LLM responses to a user prompt.
    
    \vspace{0.5cm}
    Avoid any position biases and ensure that the order in which the responses were presented does not influence your decision.
    Do not allow the length of the responses to influence your evaluation.

    \vspace{0.5cm}
    Here is the user prompt:

    \vspace{0.5cm}
    PROMPT:
    \{prompt\}

    \vspace{0.5cm}
    The two responses are:

    \vspace{0.5cm}
    Response 1:
    \{response1\}

    \vspace{0.5cm}
    Response 2:
    \{response2\}

    \vspace{0.5cm}
    Which reponse would you prefer? Enclose your final answer (1 or 2) in \texttt{\textbackslash boxed\{\{...\}\}}.
\end{tcolorbox}
To reduce the position bias, we randomly flipped two responses.

\section{Frontier Models Used to Create Candidate Responses}\label{app:models}
The 16 frontier models used to generate candidate responses are:
\begin{itemize}
    \item Gemini-2.5-Pro
    \item Gemini-2.5-Flash
    \item GPT-5
    \item GPT-4.1
    \item GPT-4o-2024-05-13
    \item o3
    \item o1-2024-12-17
    \item o4-mini
    \item Claude-Sonnet-4-20250514
    \item Claude-3-7-Sonnet-20250219
    \item Deepseek-V3
    \item Deepseek-R1
    \item Kimi-K2-Instruct
    \item GLM-4.5
    \item Qwen3-235B-A22B-Instruct-2507
    \item Mistral-Medium-Latest
\end{itemize}

\section{Principles of Selecting Prompts}\label{app:data_filtering}
We manually curated a prompt dataset according to a specific set of criteria to ensure quality and suitability for rubrics training.
\begin{itemize}
    \item Prompts have clear user intent.
    \item Prompts are not multimodal/search/trivia/GTFA.
    \item Prompts are not too simple.
    \item Prompts are generalist friendly (nothing technical that requires expert knowledge).
    \item Prompts are not open-ended / creative-writing.
    \item Prompts are designed such that there is an objectively better response (``Tell me a good bedtime story'' can have multiple good responses)
\end{itemize}

\section{Pattern detection on rubric refinements}\label{sec:clustering-detail}
In \Cref{tab:refinement_patterns} we show clusters of various rubric-refinements. To get this result, we first prompt GPT-4.1 to describe the differences between refined rubrics, and the initial rubrics (for refinement with one-\Good-pair, and one-\Great-pair). The differences are listed as atomic items describing different aspects of improvement. We then run a clustering algorithm on the atomic differences from both one-\Good-pair, and one-\Great-pair (they have almost the same number of atomic differences) to get interpretable clusters, and report the proportions from one-\Good-pair versus and one-\Great-pair

\section{Examples of Rubrics and Rubric Refinements}\label{sec:rubric-examples}

In this example, the initial rubric cannot differentiate between the two \Great responses (both from Gemini-2.5-Pro). This is because it primarily handles basic criteria (e.g. the diagnosis is correct), which both responses satisfy. The refinement process resolves the tie by 
adding a new criterion \textbf{c7}—requiring the mention of an imaging test to confirm the diagnosis. Such refinement is one example for the cluster \textbf{Enhancing verification, validation, and evidence standards} identified in \Cref{tab:refinement_patterns}.

\begin{tcolorbox}[title={Rubric Refinement Example}, colback=gray!5!white, colframe=black!50, fontupper=\scriptsize]
\label{tab:rubric-example0}

\textbf{Prompt}\\
\emph{What is the likely diagnosis for a young girl with a history of repeated pain over the medial canthus and chronic use of decongestants, who now presents with intense chills, rigors, diplopia on lateral gaze, and a congested optic disc on examination?}

\medskip
\noindent\texttt{---}

{\scriptsize
\noindent
\begin{minipage}[t]{0.48\textwidth}
\textbf{Response 1}\\
Most likely diagnosis: \textbf{Cavernous sinus thrombosis (CST)}. This is a life-threatening medical emergency.

\medskip
\textit{Why CST fits}
\begin{itemize}\setlength{\itemsep}{0.2em}\setlength{\parskip}{0pt}
  \item Medial canthus pain + decongestants $\rightarrow$ sinusitis (ethmoid/sphenoid)/dacryocystitis; valveless ophthalmic veins communicate with cavernous sinus.
  \item Intense chills/rigors $\rightarrow$ septicemia from septic thrombophlebitis.
  \item Diplopia on lateral gaze $\rightarrow$ abducens nerve (CN VI) involvement.
  \item Congested optic disc $\rightarrow$ impaired venous outflow / increased ICP.
\end{itemize}

\textit{Differential:} orbital cellulitis; meningitis/brain abscess.\\
\textit{Immediate management:} urgent contrast CT or MRI/MRV; high-dose IV antibiotics; consider anticoagulation; drain source if indicated.
\end{minipage}\hfill
\begin{minipage}[t]{0.48\textwidth}
\textbf{Response 2}\\
Most likely diagnosis: \textbf{Cavernous sinus thrombosis (CST)}, secondary to dacryocystitis/orbital cellulitis. This is a life-threatening emergency.

\medskip
\textit{Clinical reasoning}
\begin{itemize}\setlength{\itemsep}{0.2em}\setlength{\parskip}{0pt}
  \item Medial canthus pain localizes to lacrimal sac; chronic congestion/decongestants imply nasolacrimal dysfunction and recurrent dacryocystitis.
  \item Valveless facial/ophthalmic veins permit retrograde spread to cavernous sinus.
\end{itemize}

\textit{CST manifestations}
\begin{itemize}\setlength{\itemsep}{0.2em}\setlength{\parskip}{0pt}
  \item Sepsis (chills, rigors).
  \item CN VI palsy causing diplopia on lateral gaze.
  \item Papilledema from impaired venous drainage / increased ICP.
\end{itemize}

\textit{Urgency \& treatment} medical/neurosurgical emergency; high-dose IV antibiotics essential.
\end{minipage}
} %

\medskip
\noindent\texttt{---}

{\scriptsize
\renewcommand{\arraystretch}{0.95}

\textbf{Rubric Scores (Initial)}\\
\begin{tabular}{|l|c|c|c|}
\hline
Criterion (Initial) & Weight & R1 (0/1) & R2 (0/1) \\
\hline
c1: Identifies CST as most likely diagnosis & 3 & 1 & 1 \\
c2: States it is a medical emergency & 3 & 1 & 1 \\
c3: Links medial canthus pain + decongestants to sinusitis & 3 & 1 & 1 \\
c4: Diplopia due to CN VI involvement & 3 & 1 & 1 \\
c5: Papilledema from impaired venous drainage/ICP & 2 & 1 & 1 \\
c6: Chills/rigors = systemic infection/bacteremia & 2 & 1 & 1 \\
c7: Includes medical disclaimer / seek care & 2 & 0 & 0 \\
c8: Mentions orbital cellulitis differential & 1 & 1 & 1 \\
c9: Mentions high-dose IV antibiotics & 1 & 1 & 1 \\
\hline
\end{tabular}

\vspace{0.25em}
\noindent\textit{Weighted total (Initial):} R1 $=18/20$, \; R2 $=18/20$

\medskip
\noindent\texttt{---}

\textbf{Rubric Scores (Refined)}\\
\begin{tabular}{|l|c|c|c|}
\hline
Criterion (Refined) & Weight & R1 (0/1) & R2 (0/1) \\
\hline
c1: Identifies CST as most likely diagnosis & 3 & 1 & 1 \\
c2: Explicitly states CST is a medical emergency & 3 & 1 & 1 \\
c3: Links medial canthus pain + decongestants to sinusitis/dacryocystitis & 3 & 1 & 1 \\
c4: Diplopia due to abducens (CN VI) involvement & 3 & 1 & 1 \\
c5: Papilledema from impaired venous drainage/ICP & 2 & 1 & 1 \\
c6: Sepsis secondary to CST (chills/rigors) & 2 & 1 & 1 \\
\textbf{c7: Urgent imaging (contrast CT or MRI/MRV) required to confirm diagnosis} & 2 & 1 & 0 \\
c8: High-dose IV antibiotics are initial mainstay & 2 & 1 & 1 \\
c9: Medical disclaimer / seek immediate care & 2 & 0 & 0 \\
c10: Mentions orbital cellulitis differential & 1 & 1 & 1 \\
c11: Notes other CNs (III, IV, V1, V2) may be affected & 1 & 1 & 0 \\
c12: Avoids incorrect primary diagnosis & 3 & 1 & 1 \\
\hline
\end{tabular}

\vspace{0.25em}
\noindent\textit{Weighted total (Refined):} R1 $=25/27$, \; R2 $=22/27$
} %

\end{tcolorbox}

\newpage

\section{Rubrics Categorization}
\label{sec:rubric-categorization}

We find the constructed rubrics encode diverse and concrete criteria instead of superficial stylistic features. To see this, we systematically study  rubrics for the health domain constructed with the workflow using 4 pairs from \emph{Great \& Diverse} candidate responses. Since each rubric criterion assesses certain capability of policy models, we apply hierarchical K-means clustering to the targeted capabilities of each criterion. This clustering analysis yields 20 types of capabilities, with their distribution presented in~\Cref{fig:rubrics_cluster}. For each type, we illustrate its meaning with an example in the following box. We also provide the full clustering results with all examples in the supplement material.

\begin{figure}[h!]
    \centering
    \includegraphics[width=\textwidth]{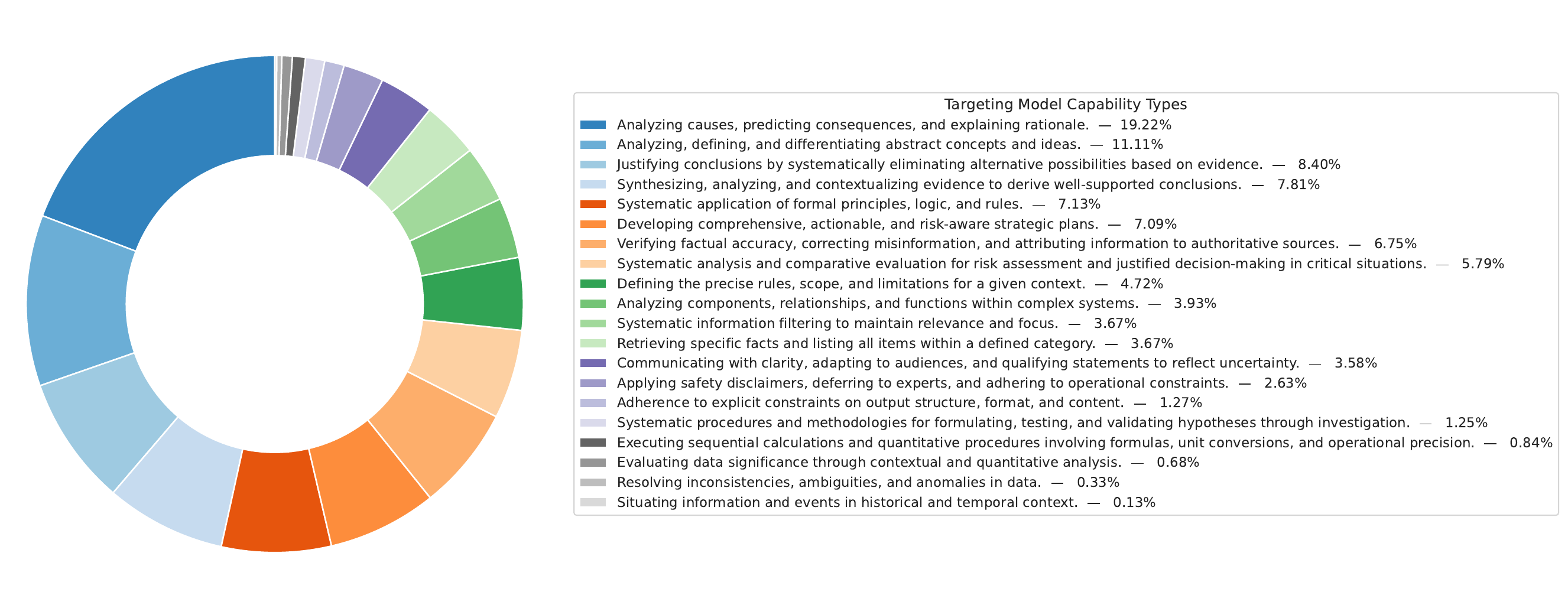}
    \caption{The Distribution of Rubrics Targeting Each Type of Model Capabilities}
    \label{fig:rubrics_cluster}
\end{figure}
\begin{tcolorbox}[
    enhanced,
    breakable,
    colframe=black,
    colback=white,
    coltitle=white,
    colbacktitle=black,
    fonttitle=\bfseries,
    title=Categorization of Target Capabilities of Rubrics,
    boxrule=1pt,
    arc=2mm,
    left=4mm,
    right=4mm,
    top=2mm,
    bottom=2mm
]

\textbf{Target Capability Type 1}\\
Analyzing causes, predicting consequences, and explaining rationale\\[0.5em]
\textbf{Example:}\\
\textbf{Prompt:} A 62-year-old man presents with an inability to tan and mild fatigue. Laboratory tests indicate mild anemia with hemoglobin at 11 g/dL, decreased haptoglobin, and 3\% reticulocytes. Given these findings and the peripheral blood smear results suggesting hemolytic anemia, what specific patient history or characteristic is most consistent with the mechanical destruction of red blood cells as the underlying cause?\\[0.25em]
\textbf{Rubric:} The response explains that decreased haptoglobin is due to its binding to free hemoglobin released during red blood cell lysis.\\[0.25em]
\textbf{Target Model Capability: } Connecting specific observations or data points to their corresponding causes, effects, or implications within a larger system.\\[1.5em]

\textbf{Target Capability Type 2}\\
Systematic analysis and comparative evaluation for risk assessment and justified decision-making in critical situations.\\[0.5em]
\textbf{Example:}\\
\textbf{Prompt:} A patient presents in a stuporous condition, with a history of persistent sadness, suicidal attempts, and a lack of eating and sleeping according to the patient's parents. What is the recommended treatment in this case? \\[0.25em]
\textbf{Rubric:} The response recommends inpatient hospitalization for safety, evaluation, and stabilization (not outpatient or home-based care). \\[0.25em]
\textbf{Target Model Capability:} Identifying critical conditions requiring immediate and decisive intervention based on a severity assessment. \\[1.5em]

\textbf{Target Capability Type 3:}\\
Developing comprehensive, actionable, and risk-aware strategic plans.\\[0.5em]
\textbf{Example:}\\
\textbf{Prompt:} What is the ideal management for a 5-year-old boy with retinoblastoma involving the entire right eyeball and 2-3 small lesions in the periphery of the other eye?\\[0.25em]
\textbf{Rubric:} The response specifies the timing of prosthetic eye fitting after enucleation (e.g., 4-6 weeks post-surgery).\\[0.25em]
\textbf{Target Capability:} Detailing a post-intervention protocol by specifying follow-on diagnostic assessments and subsequent restorative procedures with timelines.\\[0.5em]

\textbf{Target Capability Type 4:}\\
Structured and logically clear presentation.\\[0.5em]
\textbf{Example:}\\
\textbf{Prompt:} What is the most likely diagnosis for a 3-year-old child who presents with eczematous dermatitis on extensor surfaces and has a mother with a history of bronchial asthma?\\[0.25em]
\textbf{Rubric:} The response is organized logically, with clear separation between diagnosis, reasoning, differential diagnoses, and any disclaimers. \\[0.25em]
\textbf{Target Capability:} Structuring the response logically by partitioning distinct conceptual elements into clearly delineated sections.\\[0.5em]

\textbf{Target Capability Type 5:}\\
Analyzing components, relationships, and functions within complex systems.\\[0.5em]
\textbf{Example:}\\
\textbf{Prompt:} What is the most probable diagnosis for a 6-year-old boy who has been experiencing headaches and peripheral vision loss for four months, with a CT scan showing a suprasellar mass with calcification?\\[0.25em]
\textbf{Rubric:} The response links peripheral vision loss to compression of the optic chiasm by the mass. \\[0.25em]
\textbf{Target Capability:} Justifying a conclusion by explicitly linking individual pieces of evidence to their respective supporting roles in the final determination.\\[0.5em]

\textbf{Target Capability Type 6:}\\
Executing sequential calculations and quantitative procedures involving formulas, unit conversions, and operational precision.\\[0.5em]
\textbf{Example:}\\
\textbf{Prompt:} A 1-year-old child weighing 6 kg is suffering from Acute Gastroenteritis, showing signs of sunken eyes and a skin pinch test indicating rapid fluid replenishment. Based on these symptoms, what volume and rate of Ringer's Lactate infusion would you administer over the first six hours?\\[0.25em]
\textbf{Rubric:} The response provides the correct infusion rates for each phase: 180 mL/hour for the first hour and 84 mL/hour for the next five hours. \\[0.25em]
\textbf{Target Capability:} Executing a precise, multi-step quantitative calculation based on given inputs and established formulas to derive a phased implementation plan.\\[0.5em]

\textbf{Target Capability Type 7:}\\
Resolving inconsistencies, ambiguities, and anomalies in data.\\[0.5em]
\textbf{Example:}\\
\textbf{Prompt:} A patient with a head injury is admitted to the intensive care unit showing signs of raised intracranial pressure. He is placed on a ventilator and given intravenous fluids and diuretics. After 24 hours, the patient's urine output is 3.5 liters, serum sodium level is 156 mEq/l, and urine osmolality is 316 mOsm/kg. What is the most likely cause of these clinical findings?\\[0.25em]
\textbf{Rubric:} The response identifies the urine osmolality of 316 mOsm/kg as inappropriately low for the degree of hypernatremia (i.e., urine should be more concentrated in this context). \\[0.25em]
\textbf{Target Capability:} Evaluating the relationship between multiple variables to identify paradoxical or inconsistent patterns relative to expected system behavior.\\[0.5em]

\textbf{Target Capability Type 8:}\\
Verifying factual accuracy, correcting misinformation, and attributing information to authoritative sources.\\[0.5em]
\textbf{Example:}\\
\textbf{Prompt:} According to the latest resuscitation guidelines, for how long must umbilical cord clamping be delayed in preterm infants?\\[0.25em]
\textbf{Rubric:} The response identifies at least one authoritative organization issuing the guideline (e.g., AHA, ILCOR, ACOG, WHO, ERC, NRP). \\[0.25em]
\textbf{Target Capability:} Attribute the factual knowledge to the authoritative sources.\\[0.5em]

\textbf{Target Capability Type 9:}\\
Synthesizing, analyzing, and contextualizing evidence to derive well-supported conclusions.\\[0.5em]
\textbf{Example:}\\
\textbf{Prompt:} A 33-year-old woman is brought to the emergency department 15 minutes after being stabbed in the chest with a screwdriver. Given her vital signs of pulse 110/min, respirations 22/min, and blood pressure 90/65 mm Hg, along with the presence of a 5-cm deep stab wound at the upper border of the 8th rib in the left midaxillary line, which anatomical structure in her chest is most likely to be injured?\\[0.25em]
\textbf{Rubric:} The response provides a clear, logical synthesis connecting wound location, anatomical relationships, and clinical findings to justify the conclusion. \\[0.25em]
\textbf{Target Capability:} Synthesizing multiple distinct lines of evidence into a coherent, logical argument to justify a final conclusion.\\[0.5em]

\textbf{Target Capability Type 10:}\\
Analyzing, defining, and differentiating abstract concepts and ideas.\\[0.5em]
\textbf{Example:}\\
\textbf{Prompt:} A 68-year-old woman with elevated serum calcium, high parathyroid hormone, low phosphorus, and a history of kidney stones presents with fatigue, constipation, diffuse bone pain, and a 24-hour urine calcium level that is elevated. Given these clinical and laboratory findings, what radiologic finding on a hand X-ray would confirm the suspected diagnosis of this patient's condition?\\[0.25em]
\textbf{Rubric:} The response accurately distinguishes between primary and secondary hyperparathyroidism if mentioned. \\[0.25em]
\textbf{Target Capability:} Differentiating between closely related sub-categories of a primary concept based on their defining features.\\[0.5em]

\textbf{Target Capability Type 11:}\\
Defining the precise rules, scope, and limitations for a given context.\\[0.5em]
\textbf{Example:}\\
\textbf{Prompt:} You are called to evaluate a newborn who was born yesterday to a 39-year-old mother. Upon examination, what chromosomal abnormality is most likely responsible for the observations typically associated with Down syndrome?\\[0.25em]
\textbf{Rubric:} The response recommends or references karyotype analysis or equivalent genetic testing as the definitive diagnostic method for confirming the chromosomal abnormality.\\[0.25em]
\textbf{Target Capability:} Specifying the definitive method or standard procedure required for confirmation or validation.\\[0.5em]

\textbf{Target Capability Type 12:}\\
Communicating with clarity, adapting to audiences, and qualifying statements to reflect uncertainty.\\[0.5em]
\textbf{Example:}\\
\textbf{Prompt:} Which complication during pregnancy is least likely to increase the risk of postpartum uterine atonicity and why?\\[0.25em]
\textbf{Rubric:} The response uses cautious and appropriate phrasing (e.g., `least likely,' `low association') rather than making absolute claims of zero risk.\\[0.25em]
\textbf{Target Capability:} Calibrating language precisely to reflect nuances and uncertainty, avoiding absolute or overly definitive statements.\\[0.5em]

\textbf{Target Capability Type 13:}\\
Applying safety disclaimers, deferring to experts, and adhering to operational constraints.\\[0.5em]
\textbf{Example:}\\
\textbf{Prompt:} Considering the patient's history and current presentation of sudden right arm weakness, numbness, facial drooping, and slurred speech, what is the strongest predisposing factor contributing to his condition?\\[0.25em]
\textbf{Rubric:} The response avoids providing direct medical advice and, if appropriate, includes a disclaimer to seek immediate professional medical attention for stroke symptoms.\\[0.25em]
\textbf{Target Capability:} Adhering to predefined safety protocols or operational constraints by including appropriate disclaimers.\\[0.5em]

\textbf{Target Capability Type 14:}\\
Evaluating data significance through contextual and quantitative analysis.\\[0.5em]
\textbf{Example:}\\
\textbf{Prompt:} A 6-year-old boy presents with headache, cough, runny nose, and low-grade fever after being treated for a urinary tract infection with trimethoprim-sulfamethoxazole. He has a leukocyte count of 2,700/mm3 with a differential predominantly showing lymphocytes. What is the most likely underlying cause of his current symptoms?\\[0.25em]
\textbf{Rubric:} The response correctly interprets a leukocyte count of 2,700/mm3 as leukopenia for a 6-year-old child (normal range 5,000/mm3 - 15,000/mm3).\\[0.25em]
\textbf{Target Capability:} Accurately interpreting a quantitative data point by comparing it against a reference range to determine its significance.\\[0.5em]

\textbf{Target Capability Type 15:}\\
Systematic information filtering to maintain relevance and focus.\\[0.5em]
\textbf{Example:}\\
\textbf{Prompt:} A labourer involved with repair work of sewers presents with fever, jaundice, and renal failure. What is the most appropriate test to diagnose the suspected infection in this patient?\\[0.25em]
\textbf{Rubric:} The response is concise and focused on the diagnostic aspect, without excessive unrelated clinical management details.\\[0.25em]
\textbf{Target Capability:} Adhering strictly to the defined scope of a problem by excluding extraneous or irrelevant information.\\[0.5em]

\textbf{Target Capability Type 16:}\\
Systematic procedures and methodologies for formulating, testing, and validating hypotheses through investigation.\\[0.5em]
\textbf{Example:}\\
\textbf{Prompt:} Given an X-ray of a young man that shows heterotopic calcification around bilateral knee joints, what would be the next investigation to help diagnose the underlying condition?\\[0.25em]
\textbf{Rubric:} The response recommends creatine kinase (CK) testing if myositis or muscle involvement is considered in the differential.\\[0.25em]
\textbf{Target Capability:} Proposing specific, targeted investigative actions to differentiate between hypotheses or gather further evidence.\\[0.5em]

\textbf{Target Capability Type 17:}\\
Retrieving specific facts.\\[0.5em]
\textbf{Example:}\\
\textbf{Prompt:} Which nerves are associated with difficulty swallowing despite normal musculature function, and should be tested for their functionality?\\[0.25em]
\textbf{Rubric:} The response identifies the Facial nerve (Cranial Nerve VII) as relevant to the oral phase of swallowing (e.g., facial muscles, buccinator, taste, or saliva production).\\[0.25em]
\textbf{Target Capability:} Selecting specific entities from its internal knowledge that directly satisfy a given set of complex conditions.\\[0.5em]

\textbf{Target Capability Type 18:}\\
Situating information and events in historical and temporal context.\\[0.5em]
\textbf{Example:}\\
\textbf{Prompt:} A 10-year-old patient presents with tingling and numbness in the ulnar side of the finger. Four years ago, the patient sustained an elbow injury. Based on the symptoms and history, identify the fracture site that most likely occurred at the time of the initial accident.\\[0.25em]
\textbf{Rubric:} The response explicitly states or clearly implies that the patient's symptoms are a delayed complication (i.e., tardy onset) following the initial elbow injury.\\[0.25em]
\textbf{Target Capability:} Identifying and explicitly stating the temporal relationship between a past event and a current observation.\\[0.5em]

\textbf{Target Capability Type 19:}\\
Adherence to explicit constraints on output structure, format, and content.\\[0.5em]
\textbf{Example:}\\
\textbf{Prompt:} A 29-year-old pregnant woman at 10 weeks' gestation is experiencing progressively worsening nausea and vomiting, leading to a significant weight loss and affecting her ability to work. Despite taking ginger and vitamin B6, her symptoms persist. Her blood gas analysis indicates a pH of 7.43, pCO2 of 54 mmHg, and HCO3- of 31 mEq/L. What pharmacological intervention should be added to her treatment regimen to alleviate her symptoms?\\[0.25em]
\textbf{Rubric:} The response provides a clear, direct, and unambiguous recommendation for the next pharmacological agent to add.\\[0.25em]
\textbf{Target Capability:} Formulating a direct and unambiguous conclusion or recommendation that resolves the primary question.\\[0.5em]

\textbf{Target Capability Type 20:}\\
Justifying conclusions by systematically eliminating alternative possibilities based on evidence.\\[0.5em]
\textbf{Example:}\\
\textbf{Prompt:} What is the best intervention for hearing rehabilitation in a patient who has undergone surgery for bilateral acoustic neuroma?\\[0.25em]
\textbf{Rubric:} The response states that conventional hearing aids and CROS/BiCROS devices are not effective for profound bilateral sensorineural hearing loss due to bilateral cochlear nerve loss. \\[0.25em]
\textbf{Target Capability:} Invalidating alternative solutions by providing a causal explanation for their ineffectiveness under given constraints.\\[0.5em]

\end{tcolorbox}

\section{Distribution of Model Sources in Refinement through Differentiation}
\label{sec:final-top2-model-proportion}
In the refinement with four diverse and great pairs, we gather responses from a set of SOTA models and adaptively select the best pair for RTD. We present the distribution of model sources for the responses used in the final refinement round in \Cref{fig:model_distribution}. The responses are drawn from a diverse collection of models, with even the top model, Gemini-2.5-Pro, contributing only 13.64\% of the responses

\begin{figure}[h!]
    \centering
    \includegraphics[width=\textwidth]{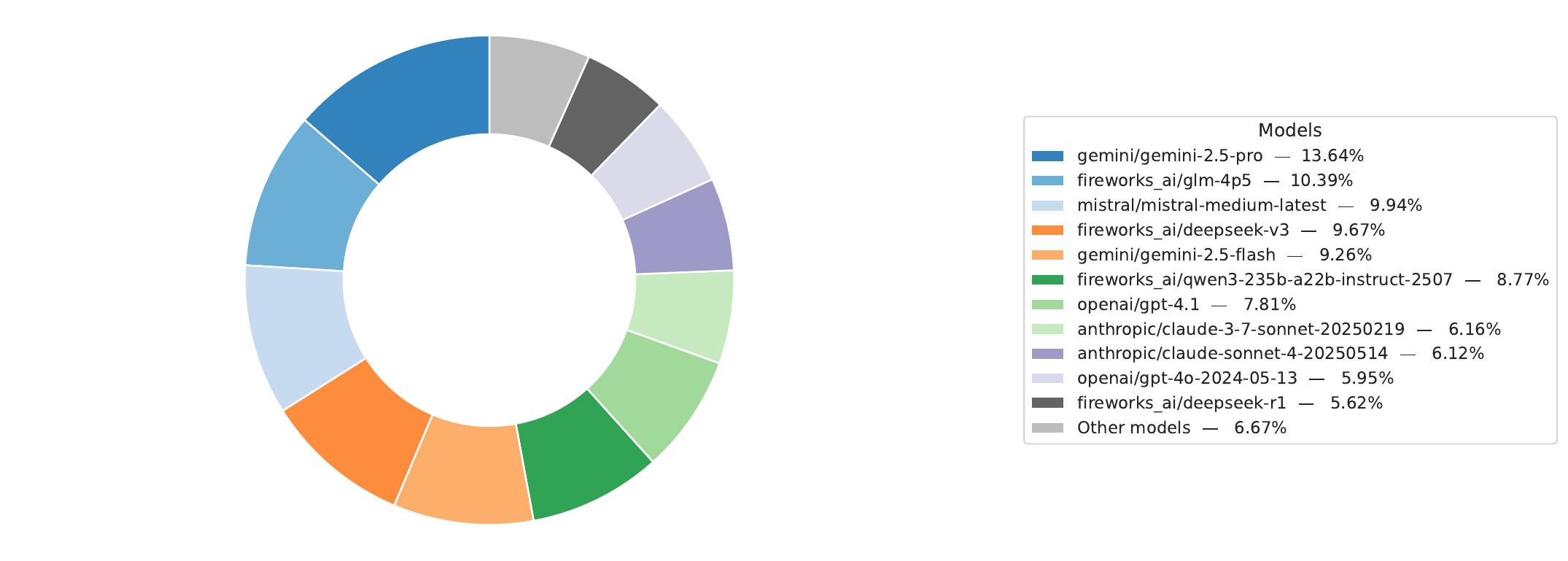}
    \caption{The Distribution of Model Sources in The Final Refinement Round}
    \label{fig:model_distribution}
\end{figure}

\end{document}